%% file: access.tex
\documentclass{ieeeaccess}
\usepackage{cite}
\usepackage{amsmath,amssymb,amsfonts}
\usepackage{graphicx}
\usepackage{textcomp}
\usepackage{algorithm}
\usepackage{algpseudocode}
\usepackage{caption}
\usepackage{bm}
\usepackage{booktabs}
\usepackage{multirow}
\usepackage{placeins}
\usepackage{amsthm}
\usepackage[normalem]{ulem}
\usepackage{soul}

\usepackage[commentmarkup=footnote,defaultcolor=black,highlightmarkup=background,final]{changes}

\input{math_commands}

\input{changes-defs}

\makeatletter
\AtBeginDocument{\DeclareMathVersion{bold}
\SetSymbolFont{operators}{bold}{T1}{times}{b}{n}
\SetSymbolFont{NewLetters}{bold}{T1}{times}{b}{it}
\SetMathAlphabet{\mathrm}{bold}{T1}{times}{b}{n}
\SetMathAlphabet{\mathit}{bold}{T1}{times}{b}{it}
\SetMathAlphabet{\mathbf}{bold}{T1}{times}{b}{n}
\SetMathAlphabet{\mathtt}{bold}{OT1}{pcr}{b}{n}
\SetSymbolFont{symbols}{bold}{OMS}{cmsy}{b}{n}
\renewcommand\boldmath{\@nomath\boldmath\mathversion{bold}}}
\makeatother

\def\BibTeX{{\rm B\kern-.05em{\sc i\kern-.025em b}\kern-.08em
    T\kern-.1667em\lower.7ex\hbox{E}\kern-.125emX}}

\begin{document}
\history{Date of publication xxxx 00, 0000, date of current version xxxx 00, 0000.}
\doi{10.1109/ACCESS.2024.0429000}

\title{Compressing What Matters: Neuron Importance Meets Data-Aware Low Rank Approximation for Language Model Compression}
\author{\uppercase{Athanasios Ntovas}\authorrefmark{1}, \uppercase{Alexandros Doumanoglou} \authorrefmark{1}, \uppercase{Petros Drakoulis}\authorrefmark{1} and \uppercase{Dimitris Zarpalas}\authorrefmark{1}}

\address[1]{
Information Technologies Institute (ITI), Centre for Research and Technology HELLAS (CERTH), Thessaloniki, Greece. \\(email: \{atdovas,aldoum,petros.drakoulis,zarpalas\}@iti.gr)
}

\tfootnote{This research was supported by the EU Project VOXReality (Voice-driven Interaction in XR Spaces) under Grant 101070521.}

\markboth
{Ntovas \headeretal: Preparation of Papers for IEEE TRANSACTIONS and JOURNALS}
{Ntovas \headeretal: Preparation of Papers for IEEE TRANSACTIONS and JOURNALS}

\corresp{Corresponding author: Athanasios Ntovas (e-mail: atdovas@iti.gr).}

\begin{abstract}
To excel at their domain, large language models are comprised of billions of parameters. Yet, this comes at the cost of huge memory requirements, restricting their applicability in resource-constrained environments. To address the problem of neural network (NN) compression, Singular Value Decomposition (SVD) has played a key role as a fundamental component for matrix compression through decomposition. To minimize compression error and to maximize the efficacy of the compressed model on the downstream tasks, previous works focused on low-rank approximation of the NN’s weight matrices either from the perspective of parameter importance or per-layer functional equivalence. While previous works studied the aforementioned perspectives in isolation, in this work, we are investigating the effectiveness of an approach that combines ideas from these two perspectives in a single objective. In parallel to this, an important aspect that affects the compression quality is the distribution of the compression rate across layers and NN parameters. Earlier works mostly considered distributing the compression rate uniformly across layers and network weights or relied on computationally expensive heuristic search. Contrary to them, in this work, we propose an enhanced and computationally efficient algorithm for dynamic compression rate allocation. Experimental results support the efficacy of the proposed approach, which performs on par or substantially better than the previous state-of-the-art, especially under high compression ratios.
\end{abstract}

\begin{keywords}
language models, neural network compression, neuron importance, SVD
\end{keywords}

\titlepgskip=-21pt

\maketitle

\section{Introduction}
\label{sec:introduction}
\input{sections/introduction}

\section{Related Work}
\label{sec:related-work}
\input{sections/related-work}

\section{Background}
\label{sec:background}
\input{sections/background}

\section{Motivation}
\label{sec:motivation}
\input{sections/motivation}

\section{Proposed Method}
\label{sec:method}
\input{sections/method}

\section{Experimental Results}
\label{sec:experimental-results}
\input{sections/experiments}

\section{Conclusion}
\label{sec:conclusion}
\input{sections/conclusion}

\bibliographystyle{IEEEtran}
\bibliography{refs}

\input{bios/atdovas/bio}
\input{bios/aldoum/bio}
\input{bios/drak/bio}
\input{bios/zarpalas/bio}

\EOD

\end{document}

%% file: math_commands.tex
\usepackage{amsmath,amsfonts,bm}

\def\eqref#1{equation~\ref{#1}}

\def\1{\bm{1}}

\def\vw{{\bm{w}}}
\def\vx{{\bm{x}}}

\def\mA{{\bm{A}}}
\def\mB{{\bm{B}}}
\def\mC{{\bm{C}}}
\def\mD{{\bm{D}}}

\def\mI{{\bm{I}}}

\def\mK{{\bm{K}}}

\def\mQ{{\bm{Q}}}

\def\mS{{\bm{S}}}

\def\mU{{\bm{U}}}
\def\mV{{\bm{V}}}
\def\mW{{\bm{W}}}
\def\mX{{\bm{X}}}
\def\mY{{\bm{Y}}}

\def\mSigma{{\bm{\Sigma}}}

\DeclareMathAlphabet{\mathsfit}{\encodingdefault}{\sfdefault}{m}{sl}
\SetMathAlphabet{\mathsfit}{bold}{\encodingdefault}{\sfdefault}{bx}{n}

\newcommand{\E}{\mathbb{E}}

\newcommand{\R}{\mathbb{R}}



%% file: changes-defs.tex
\sethlcolor{yellow}

\setaddedmarkup{\color{black}{\hl{#1}}}
\setdeletedmarkup{\color{red}{{\st{#1}}}}

\DeclareRobustCommand{\rcite}[1]{\protect\cite{#1}}
\soulregister\rcite7

%% file: sections/introduction.tex
\PARstart{T}{he} advances in Artificial Intelligence (AI) over the last decade have enabled a multitude of applications that were previously considered unreachable using traditional programming techniques. Nowadays, AI powers computer vision applications that can understand our world and chatbots that can communicate with humans in natural language, among others. Currently, AI is mainly realized by deep neural networks (DNNs), which are computational architectures that are trained to solve tasks from a particular domain at human-level performance. Large Language Models (LLMs) \cite{gpt,bert,gemma2,gemma3,llama,claude} constitute DNNs that excel in Natural Language Understanding (NLU) and Processing (NLP) and enable fluent human-machine interfaces based on natural language. However, these DNNs come with a ton of parameters and a large memory and computational footprint. With AI becoming ubiquitous, their efficient deployment in resource-constrained environments such as edge devices, smartphones, and mini-PCs becomes a hard requirement. Thus, compressing LLMs for efficient deployment has naturally emerged as a field of its own \cite{comp-survey1,comp-survey2}.

Since 2017, the transformer architecture \cite{transformer} has been the main workhorse behind LLMs. In their simplest form, transformers process natural language by splitting sentences and words into tokens, which are subsequently processed sequentially by a series of transformer blocks, organized in layers. Each transformer block consists of a self-attention module and a feedforward layer, both of which are parameterized by matrices of weights, with each weight corresponding to a parameter that is tuned during network training. Compressing a transformer network can be accomplished in a variety of ways, but the end goal is common: to reduce the number of network parameters that need to be stored in memory with a minimal compromise in network performance.

\input{sections/figures/architecture}

One line of research for LLM compression attempts to reduce the LLM's parameters by approximating the weight matrices of the transformer blocks with the product of two low-rank matrices, a technique also known as \textit{low-rank approximation} . Since low-rank, these matrices are crafted to have a sum of parameters that is less than the parameters found in the original matrix. Singular Value Decomposition (SVD) \cite{svd} is a matrix decomposition method \added{\rcite{comp-survey3}} that plays a key role in the low-rank approximation problem, as, given a target rank, it is proven to produce the best possible approximation with a minimum matrix reconstruction error.

Beyond its naive use, SVD has been employed in more sophisticated approaches for compressing DNN matrices. First, using weighted low-rank factorization, FWSVD \cite{fwsvd} takes into account the importance of each parameter to the DNN's output, improving model performance under equivalent compression ratios. And second, the data-aware approach of SVD-LLMv2 \cite{svd-llmv2} aims for a functionally equivalent approximation of the DNN's weight matrix by taking into account layer inputs to be multiplied with the transformation matrix through a small calibration dataset. Each one of the approaches has demonstrated significant improvements compared to baselines, yet an approach combining ideas from both is currently missing from the literature. 

In this work, our first contribution is to address this gap. Since data-aware low rank approximation has been the previous state-of-the-art in LLM compression, we opt for keeping it as a fundamental component of our method while seeking to improve it. Our investigation and analysis show that naively combining data-aware low rank approximation with the \textbf{parameter} importance estimation method that was suggested by FWSVD degrades the LLM's performance on downstream tasks compared to using data-aware low rank approximation alone. To overcome this, we propose to use \textbf{neuron} importance estimation, an alternative approach which considers parameters in functional groups, that often leads to substantial improvements in downstream model performance compared to the previous state-of-the-art under similar compression ratios.

Besides the contributions that we make in the compression method itself, we also consider the problem of parameter count allocation across weight matrices. Although in previous works uniform allocation has been the most common approach to distribute parameters across matrices, in the recent approach of SVD-LLMv2, a novel algorithm has been proposed as an improvement of the previous naive uniform strategy. This previous approach relies on grouping weight matrices based on their functionality in the transformer blocks. In this work, we make a second contribution by providing experimental evidence that a different grouping strategy, based on layer index, is more effective. Our analysis, ablation studies, and experimental findings support that when compressing BERT \cite{bert} \deleted{one of the most popular and fundamental LLMs}, \added{DistilBERT \rcite{distilbert}, MobileBERT  \rcite{mobilebert}, and TinyBERT \rcite{tinybert}, foundational models} for natural language understanding, in the vast majority of cases, the proposed approach achieves state-of-the-art performance compared to other SVD-based algorithms. \added{Furthermore, we include a computational analysis across these compressed architectures (DistilBERT, MobileBERT, and TinyBERT) detailing the significant reduction in both MFLOPS/token and total parameter count.}

%% file: sections/figures/architecture.tex
\begin{figure*}[t] 
    \centering
    \includegraphics[width=1\textwidth]{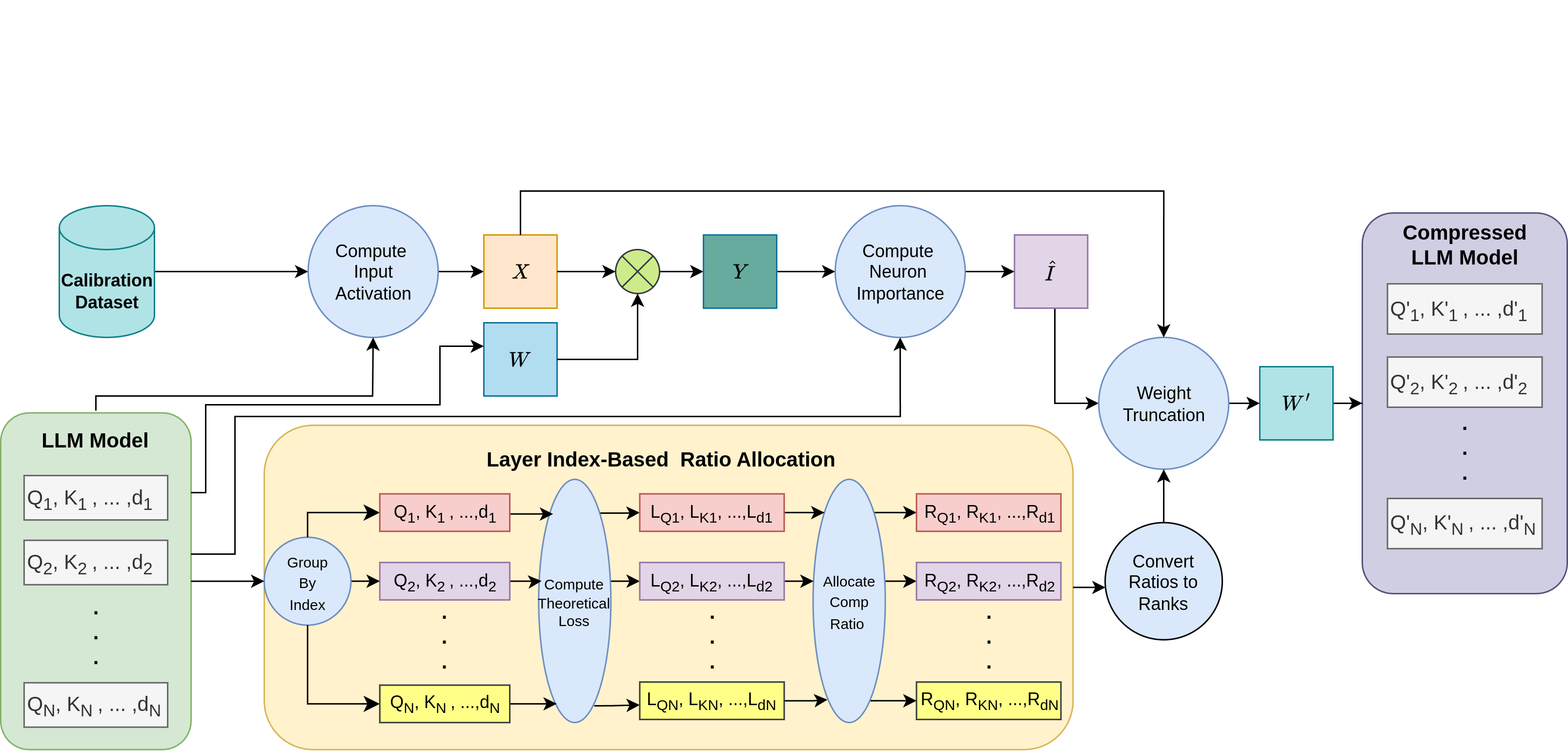} 
    \caption{Architectural overview of the hybrid compression pipeline (NIDA-SVD). Our approach synthesizes the weight importance with data-aware low-rank approximation, guided by our dynamic rank allocation scheme. In the architectural representation, layer components such as $\mQ_{i}$ (Query) and $\mK_{i}$ (Key) are used to denote the corresponding compressible weight matrices $\mW_{Q_i}$ and $\mW_{K_i}$ respectively.} 
    \label{fig:architecture}
\end{figure*}

%% file: sections/related-work.tex
Several approaches have been proposed to compress    transformer architectures \cite{transformer} in pre-trained LLMs \cite{bert,roberta,deberta,debertav3,llama,gemma2,gemma3,gpt}. Broadly, these methods fall into four categories.

\textbf{Pruning} methods \cite{prunning-1,prune-slicegpt,llm-surgeon-prune,sparse-gpt-prune} aim to eliminate as many parameters as possible, effectively zeroing out the corresponding weights in the model’s transformation matrices. On the one hand, \textbf{unstructured} pruning removes individual parameters without constraints, but typically requires specialized hardware for efficient deployment.  On the other hand, to improve hardware compatibility, \textbf{structured} pruning removes entire rows or columns from weight matrices, which aligns well with standard hardware architectures optimized for fast matrix multiplication.

\textbf{Distillation}-based methods \cite{distilbert,tinybert,dynabert} leverage knowledge distillation \cite{distillation-survey} to reduce the number of neural network (NN) parameters by training a smaller model to mimic the behavior of the original. While often effective, this approach can be computationally expensive, as it requires retraining a model from scratch.

Third, and most relevant to our work, \textbf{low-rank approximation} methods \added{\rcite{comp-survey4}} approximate transformer weight matrices using singular value decomposition (SVD), factorizing them into the product of two low-rank matrices. Standard SVD treats all entries as equally important for minimizing reconstruction error. FWSVD \cite{fwsvd} improves on this by weighting parameters according to their importance to the model’s output. Parameter importance is determined via differentiating the task loss with respect to the parameter and taking the magnitude of the gradient, a process that requires additional access to a calibration dataset. A more recent study \cite{features-low-rank-weights-not} revealed that most transformer matrices are not low rank, and thus, when trying to compress them via low-rank approximation, the resulting compressed networks exhibit significant performance drops. However, it was found that the opposite holds for intermediate token representations, which appear to have a low rank structure. Since the rank of a matrix product is less than or equal to the minimum among the ranks of individual matrices participating in the multiplication, DRONE \cite{drone}, ASVD \cite{asvd}, and SVD-LLM \cite{svd-llm} approached compression from the perspective of functional equivalence, minimizing errors in the result of matrix multiplications. However, DRONE required caching intermediate feature activations, resulting in large memory requirements. SVD-LLM alleviated this restriction by relying only on their covariance matrix. At the same time, SVD-LLM ensured faithful estimation of the matrix multiplication error based on the truncated singular values, a limitation of ASVD. Yet, in its initial version, SVD-LLM relied on Cholesky decomposition, which (a) requires positive-definite matrices, a requirement that is not always fulfilled, and (b) can suffer from numerical instabilities during iterative optimization. Both issues were addressed by SVD-LLMv2 \cite{svd-llmv2}, using a two-step SVD algorithm. Low-rank approximation has also been combined with knowledge distillation in \cite{matrix-decomp}.
Our approach combines SVD-LLMv2, the best performing method among the previously discussed state-of-the-art, with ideas inspired from FWSVD.

In low-rank approximation approaches, a fundamental hyperparameter that controls the compression rate - performance tradeoff is the rank of the compressed matrix. Since the network itself has varying sensitivity with respect to the different matrices across layers, distributing the compression rate across matrices for optimal performance typically requires an exhaustive sensitivity analysis, which is practically infeasible. 
Some of the approaches \cite{fwsvd,svd-llm} simply distribute the compression rate evenly across layers, while other approaches \cite{drone,asvd,svd-llmv2} heuristically simplify the exhaustive search in more manageable terms. Our approach is computationally efficient and builds on top of the rank selection algorithm of SVD-LLMv2.

Fourth, and orthogonal to the previous approaches for LLM compression, is weight \textbf{quantization} \cite{spin-quantization,quantization-1,actaware-quantization,bert-quantization}, which typically reduces the memory requirements to perform model inference by quantizing the parameters of the LLM under a fixed budget of binary digits. \added{Recent advances in post-training quantization (PTQ) \rcite{ptq}, demonstrate that high-accuracy LLM quantization can be achieved without retraining. In parallel, quantization-aware training (QAT) techniques \rcite{qat} tailored for LLMs further improve robustness by modeling quantization noise during fine-tuning}. Similar to other low-rank approximation techniques, our approach can additionally benefit from such schemes.

%% file: sections/background.tex
In this section, we provide foundational components to understand our approach. We begin by revisiting core elements of the Transformer architecture \cite{transformer}, followed by a description of BERT \cite{bert}, the Language Model that will become the basic subject of our experiments \added{and an overview of the architectures of DistilBERT, MobileBERT, and TinyBERT, given their role as additional models for our compression method.} We then review SVD \cite{svd} and its application as a post-training compression technique. Finally, we provide an overview of two prominent SVD-based methods, FWSVD \cite{fwsvd} and SVD-LLMv2 \cite{svd-llmv2}, which serve both as comparison baselines and inspiration for the development of the proposed method.

\subsection{The Transformer Architecture}
\label{sec:transformer}
In transformer architectures, a token is the smallest unit of input (such as a word, subword, or character) that the model processes after being mapped to an embedding vector. Let $\mX \in \R^{D\times N}$ denote a matrix of $N$ tokens, each one represented by a D-dimensional vector in the embedding space.

The core of a Transformer is a stack of layers. Each \textbf{transformer layer} or \textbf{block} contains two main components: a self-attention mechanism and a feed-forward network (FFN). The self-attention mechanism uses two matrices, namely the query $\mQ = \mW_Q^T\mX$ and key $\mK=\mW_K^T\mX$ to weigh the importance of each token to the others and create a contextualized representation by linearly combining a matrix of values $\mV = \mW_V^T\mX$. The output of the self-attention mechanism is subsequently transformed by a linear layer with weight matrix $\mW_{a}$. The feed-forward network, also referred to as the multi-layer perceptron (MLP) block, then processes the previous output to learn higher-level features. In the self-attention mechanism, four weight matrices are involved, which are amenable to compression, namely $\mW_Q,\mW_K$, $\mW_V$, and $\mW_{a}$, while the feed-forward block introduces two additional matrices, with the first performing up-projection and the second down-projection. We refer to those matrices as $\mW_{u}$ and $\mW_{d}$, respectively.

These two components of the transformer block are wrapped in a residual connection, and each one is followed by layer normalization, which both are essential for stable training and scaling to the deep architectures that characterize modern LLMs. 

\subsection{Neurons and Pre-Activations}
A neuron in a NN can be understood as a simple function that computes the inner product between an input vector and a weight vector. If the input is an embedding vector $\vx \in \mathbb{R}^D$ corresponding to a token representation, and the neuron's weights are $\vw \in \mathbb{R}^D$, the output is given by $y = \vw^T \vx$, which is a scalar representing the neuron's output, also sometimes referred to as the neuron's \textit{pre-activation} due to often becoming the input to a non-linear activation function such as the rectified linear unit (ReLU) \cite{DLBook}.

At a layer level, many neurons are organized together, and their weights form a weight matrix $\mW \in \mathbb{R}^{D \times D^\prime}$ , where $D^\prime$ corresponds to the number of neurons in the layer. Additionally, a collection of $N$ input token representations may be arranged in a matrix $\mX \in \R^{D \times N}$. Then, the layer's output computation can be expressed compactly as $\mY = \mW^T\mX$, where the output $\mY \in \mathbb{R}^{D^\prime \times N}$ is a matrix, and each element of $\mY$ corresponds to the output of a single neuron for a single token. Typically, a \textbf{transformer layer} is comprised of a set of weight matrices, each one being responsible for supporting a different mechanism, such as the self-attention mechanism or the transformation of the feed-forward network. Still, we will be referring to these individual weight matrices as \textbf{layers} due to stacking a collection of neurons. In most cases, we will be mentioning transformer layers explicitly, but occasionally whether the term layer refers to them can be inferred from context.

\subsection{\replaced{BERT-based Architectures}{BERT}}
The standard Bidirectional Encoder Representations from Transformers (BERT) model is an encoder-only architecture composed of 12 identical Transformer blocks, all following the description given in Section \ref{sec:transformer}. BERT pre-training involves bi-directional mask-language modeling, i.e. training the network to predict masked tokens by conditioning both on the left and on the right context of the masked token, and additionally performing next sentence prediction, i.e. to predict whether a second sentence is a natural continuation of the first one. This pre-training scheme allows BERT to excel in NLU tasks with subsequent fine-tuning and a minimal computation budget. 

\added{DistilBERT is essentially a compact version of BERT, successfully distilled down to 6 layers half the size of the original resulting in $40\%$ fewer parameters. The trick is knowledge distillation: the small student model is trained not just on text, but also on the outputs of the larger BERT teacher, which allows it to retain a remarkable $\approx 97\%$ of the original model's language understanding capability.}

\added{MobileBERT was designed specifically for speed and efficiency on mobile devices. Its unique structure uses a bottleneck design by introducing an intermediate projection layer that significantly narrows the Transformer's hidden dimensions, coupled with factorized self-attention mechanisms. This innovative layer decomposition reduces the parameter count and latency dramatically, allowing it to maintain strong performance while being highly efficient.}

\added{TinyBERT adopts a more aggressive strategy for model compression, aiming to substantially reduce model size while preserving downstream performance. Its training relies on a comprehensive two-stage knowledge distillation framework. In the first stage, the student model learns from the teacher’s internal representations by distilling information from intermediate Transformer layer outputs. In the second stage, it captures the teacher’s behavior by distilling its attention distributions. By jointly aligning both hidden states and attention patterns with those of the teacher, TinyBERT achieves significant compression often up to $7.5\times$ smaller than BERT base while maintaining strong task performance.}

\subsection{Low-rank approximation for model compression}
A common approach to reduce the size of large neural networks is via SVD and low-rank approximation. A weight matrix $\mW \in \R^{m \times n}$ can be approximated as $\mW^\prime = \mU_r\mSigma_r\mV_r^T$, where $\mU_r \in \R^{m \times r}$ and $\mV_r \in \R^{n \times r}$ are truncated orthogonal matrices, $\mSigma_r \in \R^{r \times r}$ is a diagonal matrix of the top $r$ singular values, and $r \le \text{rank}(W)$. The singular values in $\mSigma_r$, ordered from largest to smallest, represent the contribution of the corresponding components of $\mU_r$ and $\mV_r$ in the original matrix $\mW$.

This technique is used to compress the large, weight matrices found in models like BERT. Instead of storing the full original matrix $\mW \in \R^{m \times n}$, we can store two smaller matrices, $\mA \in \R^{m \times r}$ and $\mB \in \R^{r \times n}$, which, when multiplied, approximate the original matrix ($\mW \approx \mA\mB$). This can be done by simply setting $\mA = \mU_r\sqrt{\mSigma_r}$ and $\mB=\sqrt{\mSigma_r}\mV_r^T$, with $\sqrt{\mSigma_r}$ denoting the matrix with all elements equal to the square root of $\mSigma_r$.

According to our definition, the original model's matrix $\mW$ has $mn$ parameters. The total number of parameters in $\mA$ and $\mB$ is $mr + rn = r(m+n)$, where r is the chosen rank for the approximation. For effective compression, the new parameter count must be less than the original. This condition is met when: $mr + rn < mn$. This concludes that $r$ must be chosen such that $r < (mn)/(m+n)$. A smaller $r$ decreases parameter count and speeds up computations, but increases approximation error, which can affect performance.

\added{The foundational method for low-rank approximation within the Transformer architecture is truncated SVD (tSVD), which is deterministic. tSVD works by isolating and keeping only the most important singular components (vectors and values) while discarding the rest. The complexity of tSVD is managed by utilizing specialized Krylov-based algorithms \rcite{tsvd} that focus computation exclusively on these dominant components, avoiding the cost of calculating the full matrix spectrum. An alternative approach is randimized SVD (rSVD) \rcite{rSVD}, which uses random projections to create a smaller, sampled version of the matrix, significantly reducing the computation time. While rSVD offers a notable speed advantage, our methodology relies on the deterministic tSVD because it provides the essential guarantee of optimality (Eckart-Young theorem) and numerical stability. This stability is crucial for zero-shot compression tasks.}

\subsection{FWSVD}
\label{sec:fwsvd}
Fisher-Weighted Singular Value Decomposition (FWSVD) extends low-rank approximation by integrating the concept of parameter importance. While the SVD method assumes that all parameters of a model's weight matrix $\mW$ have the same importance to the matrix reconstruction error, FWSVD leverages the Fisher information matrix to quantify the importance of each parameter in $\mW$ to the task loss, ensuring that the most critical components for model performance are better preserved during the compression process. 

For a single parameter $w_{ij}$ at the location $(i,j)$ in a model's weight matrix $\mW$, the Fisher Information, denoted as $\hat{I}_{w_{ij}}$, measures the amount of information that a dataset $D$ provides about the parameter. This is computed as the sample average of the squared partial derivative of the task's loss function $L(d_i; w_{ij})$ with respect to $w_{ij}$, given by:
\begin{equation}
\label{eq;fisher-pi}
\hat{I}_{w_{ij}} = \frac{1}{|D|} \sum_{i=1}^{|D|} \left( \frac{\partial}{\partial w_{ij}} L(d_i; w_{ij}) \right)^2    
\end{equation}
with $d_i$ denoting the $i$-th sample of the dataset and $|D|$ its total number of samples. 

The Fisher-Weighted approach changes the optimization objective from a generic mathematical one to a task-specific one. While the standard SVD objective is to find a low-rank approximation $\mW^\prime$ that minimizes the reconstruction error, expressed as $\text{min}_{\mW'}||\mW-\mW'||_2$, the FWSVD objective is given by $\text{min}_{\mW'} ||\hat{\mI}_w\circ(\mW-\mW')||_F$ with $\hat{\mI}_w$ a matrix with elements $\hat{I}_{w_{ij}}$ and $\circ$ denoting the Hadamard product. However, this optimization problem does not have a closed-form solution. For this reason, the authors of \cite{fwsvd} proposed an approximation: $\text{min}_{\mW'} ||\hat{\mI}\mW-\hat{\mI}\mW'||_F$, with $\hat{\mI}$ a diagonal matrix whose diagonal element $\hat{I}_i$ in row $i$ is equal to $\hat{I}_i = \sqrt{\sum_j \hat{I}_{w_{ij}}}$. In this way, each neuron $i$ is assigned an importance weight $\hat{I}_i$ based on the Fisher information that the dataset provides regarding its parameters. The approximation holds due to the fact that whenever all elements in a row of $\mW$ share the same importance, the Hadamard product can be written as a standard matrix product with a diagonal matrix.

\subsection{SVD-LLMv2}
\label{sec:background-svd-llmv2}
The most prominent SVD-based work is SVD-LLMv2, which combines the concepts of data-aware low-rank approximation with a computationally efficient and effective rank allocation algorithm, an algorithm that assigns ranks to each one of the approximated matrices under a predefined parameter budget. Instead of minimizing the error on the weight matrix (as SVD and FWSVD do), the optimization objective of SVD-LLMv2 is defined on the layer's output pre-activations, given by $\text{min}_{\mW'}||\mW\mX-\mW'\mX||_2$, with $\mX$ denoting a matrix of token representations to be transformed by the layer's matrix $\mW$. The SVD-LLMv2 approach, by focusing on the layer's functional behavior, has been shown to better preserve task performance. 

Additionally, the algorithm for matrix approximation that was introduced by SVD-LLMv2 ensures that the error in the layer's output pre-activations can be directly predicted from the truncated singular values. This is a significant improvement over previous approaches \cite{asvd,drone}, which suffer from a sharp drop in performance when truncating the smallest singular values due to the fact that they are not directly related to the truncation error of the layer's pre-activations.

Rather than applying a uniform compression ratio to every layer, which can be inefficient given that the sensitivity of the network with respect to different layers may vary, SVD-LLMv2 considers a heterogeneous allocation. It first groups the weights by their type, regardless of transformer layer index (e.g $\mW_K, \mW_Q, \mW_V, \mW_a, \mW_u, \mW_d$, etc.). Then it assigns a target rank to each of the weight matrices in the group by using the error in the layer's pre-activations under uniform rank allocation as a proxy for the layer's sensitivity with respect to rank reduction. This approach considers the matrix sensitivity to rank reduction from a functional perspective, resulting in a better trade-off between model size and performance preservation. What makes this grouping strategy efficient is that matrices within the same group share the same target parameter sub-budget, i.e. a parameter sub-budget is defined for each group based on the overall target compression ratio, and subsequently this sub-budget is distributed across matrices within the group. 

%% file: sections/motivation.tex
\subsection{Combining Neuron Importance with Data-Aware Low Rank Approximation}
\label{sec:motivation-neuron-impartance}
Summarizing the discussion of the previous Section, to compute the low-rank approximation of a weight matrix, on the one hand, FWSVD considers the effect of each parameter on the task loss, which we refer to as \textbf{parameter importance} (PI). On the other hand, SVD-LLMv2 considers the effect of the parameters on the layer's output pre-activations, aiming for a functional equivalence of the layer before and after the compression. In this work, we are inspired by both methods in order to explore an approach that combines ideas from both perspectives. In particular, instead of minimizing $\min_{\mW'}||\mW\mX - \mW'\mX||_F$ aiming for low reconstruction error on the layer's output, i.e. the functional equivalence objective of SVD-LLMv2, we also consider the effect of the layer's output pre-activation on the task loss. This leads to the following optimization objective $\min_{\mW'}||\hat{\mI}_y \circ (\mW\mX-\mW'\mX)||_F$, with $\hat{\mI}_y$ denoting a matrix containing the importance of each neuron's output to the task loss. 

Solving this optimization problem faces the same limitation as the original formulation of FWSVD, i.e. a lack of closed-form solution. For this reason, we also approximate the Hadamard product by considering standard multiplication with a diagonal matrix $\hat{\mI}$ whose diagonal element in row $i$ is equal to the overall importance of the $i$-th neuron to the task loss. The latter is computed by aggregating the neuron's importance over several samples in a calibration dataset. Thus, in our approach, we find $\mW'$ that minimizes $||\hat{\mI}\mW\mX - \hat{\mI}\mW'\mX||_F$. A first intuitive choice for computing the diagonal elements of $\hat{\mI}$ is to aggregate \textbf{parameter importances} for each neuron, as it is done in FWSVD. However, in the experiments, we provide statistically significant evidence that this is suboptimal, leading to compressed networks that perform worse than when using data-aware low-rank approximation alone. As we discuss in Section \ref{sec:neuron-importance}, we were able to improve on that by considering a different strategy. This strategy is based on a \textbf{direct way} to measure \textbf{neuron importance} (NI) that does not rely on the \textbf{parameter importance} previously considered by FWSVD.

\subsection{Dynamic Rank Allocation Across Matrices}
\label{sec:motivation-rank-allocation}
Each target compression ratio ($CR$) can be directly translated to a total parameter budget under which the compressed LLM should fit. In low-rank approximation, the number of parameters in a layer is directly controlled by the rank of the approximation. Thus, distributing the total available parameters to individual layers can be accomplished by deciding on the rank of the approximation for each weight matrix. Performing \textbf{rank allocation}, that is, distributing target matrix ranks under a fixed parameter budget to individual layers, is a combinatorial and practically intractable problem, as optimal allocation relies on exhaustive search. 

Previous approaches either resorted to proportionally equivalent reduction of parameters across layers \cite{fwsvd,svd-llm} (which we call uniform allocation), or based their decision on algorithms that were inspired by studies in a reduced search space. For instance, \cite{asvd} found that optimal rank allocation varies both with layer depth and the layer's functional type (i.e. if it corresponds to $\mW_Q,\mW_V,\mW_a,\mW_u$, etc.). Meanwhile, \cite{drone} found that compression distortions from lower layers may result in progressively accumulative errors towards the latter layers, implying that rank allocation should vary with depth and compressing lower layers should be more conservative compared to higher ones. Additionally, \cite{svd-llmv2} focused on allocating ranks for layers in the same functional group, exploiting the fact that the matrices in the same group share the same size. Despite the reduction in the search space, the rank allocation algorithm of \cite{drone} is still computationally expensive and is driven by model performance instead of parameter budget. Even though the rank allocation algorithm of \cite{svd-llmv2} is computationally efficient, adapting it to a different grouping strategy requires addressing the challenge of matrix size variability among matrices in the same group.  Our approach addresses these challenges by building on the findings of \cite{drone}, while adapting the computationally efficient algorithm of \cite{svd-llmv2} to group layers by their depth index instead of their functional role.


%% file: sections/method.tex
Our proposed hybrid compression methodology is a post-training, SVD-based pipeline that fuses key concepts from recent advancements to achieve more efficient and effective compression. As shown in Figure 1, our approach synthesizes data-aware low-rank approximation with the concept of Fisher-weighted neuron importance. We also introduce a heterogeneous rank allocation that improves upon previous methods by adapting compression on a per-layer basis. The following subsections detail the core components of our pipeline, outlining our complete process from the computation of neuron importance to the final data-aware low-rank approximation and rank allocation algorithm.

\subsection{Neuron Importance Estimation}
\label{sec:neuron-importance}
We propose to estimate neuron importance by computing the amount of Fisher Information that a dataset provides about the neuron's \textbf{output pre-activation}, \textbf{instead of the neuron's parameters}. Following the notation of Section \ref{sec:motivation-neuron-impartance}, we calculate the element of $\hat{\mI}_y$ at the position $(i,j)$ for the $k$-th sample of the dataset $d_k$ as:
\begin{equation}
    \hat{I}_{y_{ij}} = \left( \frac{\partial}{\partial y_{ij}} L(y_{ij}(d_k)) \right)^2
\end{equation}
with $L$ the task loss function and
 $y_{ij}$ a spatial element in the matrix of layer outputs: $\mY=\mW^T\mX$. We subsequently form the neuron importance matrix $\hat{\mI}$, a diagonal matrix with its element in row $i$ being equal to the importance of neuron $i$:
\begin{equation}
    \hat{I}_i = \sqrt{\frac{1}{|D|} \sum_{k=1}^{|D|}\E_j\big[\hat{I}_{y_{ij}} | d_k\big]}
\end{equation}

\subsection{Data-Aware Low Rank Approximation driven by Neuron Importance}
We adapt the weight truncation algorithm of SVD-LLMv2 by integrating the neuron importance matrix $\hat{\mI}$ of the previous section into the process of low-rank approximation. Algorithm \ref{algo:weight_truncation} summarizes the process of decomposing a matrix $\mW$ to its low-rank approximation $\mW'=\mA\mB$ and is a generalization over the algorithm proposed by SVD-LLMv2, with the two algorithms being identical whenever the neuron importance matrix is equal to the identity matrix.
\input{sections/algorithm/truncation}
\newtheorem{theorem}{Theorem}

\added{
Let $L=||\hat{\mI}\mW\mX - \hat{\mI}\mW^\prime\mX||_F$ denote the weighted compression loss when approximating $\mW$ with a low rank matrix $\mW^\prime$. For any given rank of the approximation, the theoretical minimum loss is given by the error induced by the truncated SVD decomposition of $\hat{\mI}\mW\mX$, denoted as $||\mathbf{SVD}(\hat{\mI}\mW\mX)||_F$.
}
\begin{theorem}
\label{theorem-truncation}
\added{
If $\mU_s, \mS_s, \mV_s$ are obtained by the SVD decomposition of $\mX\mX^T$ and $\mU_{ws}^t,\mS_{ws}^t,\mV_{ws}^t$ are obtained by the truncated SVD decomposition of $\hat{\mI}\mW\mU_s\sqrt\mS_s$, the compressed weight matrix $\mW^\prime = \hat{\mI}^{-1}\mU_{ws}^t\mS_{ws}^t\mV_{ws}^t\sqrt{\mS_s}^{-1}\mU_s^{-1}$ minimizes the weighted compression loss $L$.
}
\end{theorem}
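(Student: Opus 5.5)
The plan is to reduce the weighted loss to an ordinary Frobenius low-rank approximation problem and then invoke the Eckart--Young--Mirsky theorem. Two assumptions are needed: $\hat{\mI}$ is invertible (all neuron importances positive), and $\mX\mX^T$ is positive definite, so that $\mS_s$ is invertible. Under these assumptions, the matrix $\mX\mX^T$ is symmetric positive definite, so its SVD coincides with its eigendecomposition. Hence $\mU_s=\mV_s$ is orthogonal, $\mU_s^{-1}=\mU_s^T$, and $\mX\mX^T=\mU_s\mS_s\mU_s^T$.

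The key identity concerns an arbitrary matrix $\mM$ of appropriate size:
\[
\|\mM\mX\|_F^2=\mathrm{tr}(\mM\mX\mX^T\mM^T)=\mathrm{tr}(\mM\mU_s\sqrt{\mS_s}\sqrt{\mS_s}\mU_s^T\mM^T)=\|\mM\mU_s\sqrt{\mS_s}\|_F^2 .
\]
I will apply this with $\mM=\hat{\mI}(\mW-\mW')$, which gives
\[
L=\|\hat{\mI}\mW\mU_s\sqrt{\mS_s}-\hat{\mI}\mW'\mU_s\sqrt{\mS_s}\|_F .
\]
I then substitute $\mZ=\hat{\mI}\mW'\mU_s\sqrt{\mS_s}$. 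Because $\hat{\mI}$, $\mU_s$ and $\sqrt{\mS_s}$ are all invertible, the map $\mW'\mapsto\mZ$ is a bijection that preserves rank. Minimizing $L$ over $\mW'$ of rank at most $r$ is therefore equivalent to minimizing $\|\hat{\mI}\mW\mU_s\sqrt{\mS_s}-\mZ\|_F$ over $\mZ$ of rank at most $r$.

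By Eckart--Young--Mirsky, the minimizer is the truncated SVD $\mZ^\star=\mU_{ws}^t\mS_{ws}^t\mV_{ws}^t$. Inverting the substitution gives exactly $\mW'=\hat{\mI}^{-1}\mU_{ws}^t\mS_{ws}^t\mV_{ws}^t\sqrt{\mS_s}^{-1}\mU_s^{-1}$, which is the matrix in the statement. The attained loss equals the root-sum-of-squares of the discarded singular values of $\hat{\mI}\mW\mU_s\sqrt{\mS_s}$. To link this to the stated lower bound, I will check that these singular values equal those of $\hat{\mI}\mW\mX$. Both matrices have the same Gram matrix $\hat{\mI}\mW\mX\mX^T\mW^T\hat{\mI}^T$, so their singular values coincide. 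The truncated loss therefore matches the theoretical minimum $\|\mathbf{SVD}(\hat{\mI}\mW\mX)\|_F$; this matching is what is needed, since $\mW'\mX$ ranges only over matrices of the form $\mW'\mX$ rather than all rank-$r$ matrices.

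The main obstacle is the invertibility assumptions, together with the orientation convention, since the paper writes $\mY=\mW^T\mX$ in some places but uses $\mW\mX$ in the loss. I will state the positive-definiteness of $\mX\mX^T$ and the positivity of the $\hat{I}_i$ explicitly. If $\mX\mX^T$ is singular, I would restrict to its range, or replace $\sqrt{\mS_s}^{-1}$ by a pseudo-inverse and argue on the column space of $\mX$. That argument would go through because the loss depends on $\mW'$ only through its action on $\mathrm{range}(\mX)$.
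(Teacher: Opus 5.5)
Your proposal is correct and is essentially the paper's argument: you whiten with $\mC=\mU_s\sqrt{\mS_s}$, which turns the weighted loss into the plain Frobenius truncation error of $\hat{\mI}\mW\mC$. You then invoke Eckart--Young and identify the singular values of $\hat{\mI}\mW\mC$ with those of $\hat{\mI}\mW\mX$.

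The differences are minor. You get the norm identity from $\mX\mX^T=\mC\mC^T$ via the trace, where the paper uses the SVD of $\mX$ and the semi-orthogonality of $\mC^{-1}\mX$. You prove optimality by a rank-preserving change of variables, where the paper shows the proposed $\mW'$ attains the lower bound $\|\mathbf{SVD}(\hat{\mI}\mW\mX)\|_F$. Your route also makes explicit the invertibility assumptions (positive $\hat{I}_i$, positive definite $\mX\mX^T$) that the paper leaves implicit.
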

\begin{proof}
\added{Let $\mU_x,\mS_x,\mV_x$ denote the matrices from the SVD decomposition of $\mX$. Since $\mX\mX^T$ is symmetric, $\mU_s=\mV_s$. Moroever, $\mU_x=\mU_s$ and $\mS_x=\sqrt{\mS_s}$. Let also $\mC=\mU_s\sqrt{\mS_s}$ which implies that $\mC^{-1}=\sqrt{\mS_s}^{-1}\mU_s^{-1}$. It is easy to show that $\mC^{-1}\mX = \mV_x$ which is orthogonal and thus does not affect the Frobenius norm under matrix multiplication. For the proposed choice of $\mW^\prime$ the weighted compression loss $L$ becomes}:
\begin{align*}
    L &= ||\hat{\mI}\mW\mX - \hat{\mI}\mW^\prime\mX||_F  \\
      &= ||\hat{\mI}\mW\mX - \hat{\mI}\hat{\mI}^{-1}\mU_{ws} ^t\mS_{ws}^t\mV_{ws}^t\sqrt{\mS_s}^{-1}\mU_s^{-1}\mX||_F \\
      &= ||\hat{\mI}\mW\mC\mC^{-1}\mX - \mU_{ws} ^t\mS_{ws}^t\mV_{ws}^t\mC^{-1}\mX||_F \\ 
      &= ||(\hat{\mI}\mW\mC - \mU_{ws} ^t\mS_{ws}^t\mV_{ws}^t)\mC^{-1}\mX||_F\\
      &= ||(\hat{\mI}\mW\mC - \mU_{ws} ^t\mS_{ws}^t\mV_{ws}^t)||_F \\
      &= ||\mathbf{SVD}(\hat{\mI}\mW\mC)||_F \\
      &= ||\mathbf{SVD}(\hat{\mI}\mW\mU_s\sqrt{\mS_s})||_F \\
      &= ||\mathbf{SVD}(\hat{\mI}\mW\mU_x\mS_x)||_F \\
      &= ||\mathbf{SVD}(\hat{\mI}\mW\mU_x\mS_x\mV_x)||_F \\
      &= ||\mathbf{SVD}(\hat{\mI}\mW\mX)||_F
\end{align*}
\added{which is equal to the theoretical minimum.}
\end{proof}
\added{Algorithm \mbox{\ref{algo:weight_truncation}} computes $\mW^{\prime}$ based on Theorem \mbox{\ref{theorem-truncation}} and suggests a low rank matrix approximation that minimizes the weighted compression loss.}
\subsection{Dynamic Rank Allocation}
\label{sec:rank-allocation}

Following Section \ref{sec:background-svd-llmv2}, and based on the findings of \cite{drone} we propose an adaptation of the rank allocation algorithm of \cite{svd-llmv2} by changing the grouping strategy of the weight matrices to be based on transformer layer index (i.e. layer's depth in the transformer's sequence of layers). This change in the grouping implies that a target parameter sub-budget is defined for all matrices within a transformer layer, instead of matrices with the same functional role. The algorithm still distributes this sub-budget to each matrix in the group based on matrix sensitivity to rank reduction, as it was proposed in \cite{svd-llmv2}.

Our layer index-based rank allocation method groups all weight types (e.g. $\mW_Q$, $\mW_K$, $\mW_V$, $\mW_a$, $\mW_u$ and $\mW_d$) with the same transformer layer index $l$, by treating each one of the transformer layers in the BERT-family models as a distinct group. This layer-wise approach modifies the algorithm to derive a compression ratio $r$ for each weight matrix and ensures iterative refinement for optimal balance.
\input{sections/algorithm/allocation}

Algorithm \ref{algo:ratio_allocation} summarizes our layer index-based rank allocation strategy. The algorithm requires as an input the original LLM, a representative set of input activations resulting from the calibration dataset, and the effective compression ratio. The effective compression ratio is calculated to account for parameters non amenable to compression, possibly due to the high sensitivity of the network with respect to these parameters, while still aiming for a specific overall target compression ratio. The algorithm first groups the model's weights by their layer (line 2) and then computes a normalized theoretical loss (lines 6-10), which serves as a proxy for that layer's sensitivity to rank reduction. For a given matrix $\mW$, the theoretical loss function first translates the target compression ratio ($R$) into the corresponding retained SVD rank ($k$) to compute its low-rank approximation $\mW'$. It then uses $\mW'$ to calculate the error on the layer's output pre-activations $L_{\min} = \|\mW\mX - \mW'\mX\|_{F}$, indicated as \textbf{theoretical loss}. Compression ratios are then distributed proportionally to these normalized errors (lines 12-15), ensuring that layers deemed more sensitive to rank reduction receive higher ranks. Line 18 performs the necessary transformation from the calculated proportional ratios ($R_d$) into the specific, non-zero integer SVD ranks ($k$) required for each weight matrix $\mW$ in the model ($M$). To further meet the consistency with the global compression target, the iterative refinement loop (lines 23-34) adjusts matrix ranks to ensure that the desired ratio is achieved. The update rule (lines 28-29) ensures convergence by increasing/decreasing matrix ranks whenever the compressed model is under/over the target parameter budget. This dynamic adjustment makes the allocation procedure adaptive and computationally efficient, avoiding exhaustive search strategies.

Overall, our layer index-based rank allocation algorithm balances two objectives: (i) preserving the most critical layers through proportional loss-aware rank allocation, and (ii) meeting the global compression constraint through iterative refinement. 

\input{sections/figures/weights-comparison}
\input{sections/figures/ratio-comparison}

%% file: sections/algorithm/truncation.tex
\begin{algorithm}[t]
\captionsetup{font=small}
\caption{Weight Truncation Algorithm}
\small
\begin{algorithmic}[1] 
\Statex \textbf{Input:} $\mW$: Original weight matrix 
\Statex \hspace{3em} $\mX$: Matrix of layer input activations
\Statex \hspace{3em} $\hat{\mI}$: Diagonal matrix of neuron importances
\Statex \hspace{3em} $R$: Target rank for the low-rank approximation
\Statex \textbf{Output:} $\mA,\mB,\mW'$: Low rank approximation factors $\mA,\mB$ and compressed weight matrix $\mW'$.
\Procedure{Weight\_Truncation}{$\mW, \mX, \mI, R$}
    \State $\mS \gets \mX\mX^T$  \Comment{Construct matrix $S$ from $X$}
    \State $\mU_s, \mS_s, \mV_s \gets \boldsymbol{\textrm{SVD}}(\mS)$ \Comment{Perform SVD on matrix $S$} 
    \State $\mD \gets \hat{\mI} \mW \mU_s \sqrt{\mS_s}$  \Comment{Construct matrix $D$}
    \State $\mU_{ws}, \mS_{ws}, \mV_{ws} \gets \boldsymbol{\textrm{SVD}}(\mD)$ \Comment{Perform SVD on matrix $\mD$}
    \State $\mU_{ws}^t, \mS_{ws}^t, \mV_{ws}^t \gets \boldsymbol{\textrm{Truncate}}(\mU_{ws}, \mS_{ws}, \mV_{ws}, R)$ \Comment{Perform SVD truncation based on target rank approximation $R$}
    \State $\mA \gets \hat{\mI}^{-1}\mU_{ws}^t\sqrt{\mS_{ws}^t}$ \Comment{The first matrix of low-rank approximation}
    \State $\mB \gets \sqrt{\mS_{ws}^t}\mV_{ws}^t\sqrt{\mS_s}^{-1}\mU_s^{-1}$ \Comment{The second matrix of low-rank approximation}
    \State $\mW' \gets \mA\mB$ \Comment{Reconstructed low-rank approximation $\mW'$}
    \State \Return{$\mA,\mB,\mW'$}
\EndProcedure
\end{algorithmic}
\label{algo:weight_truncation}
\end{algorithm}

%% file: sections/algorithm/allocation.tex
\begin{algorithm}[t]
\captionsetup{font=small}
\caption{Layer Index-based Rank Allocation Algorithm }
\small
\begin{algorithmic}[1] 
\Statex \textbf{Input:} $M$: Original LLM
\Statex \hspace{3em} $\mX$: Input activations
\Statex \hspace{3em} $R$: Effective target compression ratio
\Statex \textbf{Output:} $R_d$: Compression rank allocation per layer (list of lists) 
\Procedure{Rank\_Allocation}{$M, \mX, R$}
    \State $G \gets \boldsymbol{\textrm{Group}}(M)$ \Comment{Group weights by layer index}
    \State $R_d \gets \emptyset$ \Comment{Initialize the compression ratio list} 
    \For{$g$ \textbf{in} $G$}
        \State $L_G \gets \emptyset$ \Comment{Initialize the loss list in the group} 
        \For{$\mW$ \textbf{in} $g$}
            \State $L_{min} \gets \boldsymbol{\textrm{Theoretical\_Loss}}(\mW, \mX, R)$    \Comment{Compute per weight theoretical loss}
            \State $L_{G} \gets L_{G} \cup L_{min}$ \Comment{Append loss to list}
        \EndFor
        \State $L_G \gets 1 / \boldsymbol{\textrm{Log}}(L_G)$    \Comment{Normalize $L_G$}
        \State $layer\_ratios \gets \emptyset$ \Comment{Initialize layer compression ratios list} 
        \For{$L_{min}$ \textbf{in} $L_G$}
            \State $r \gets \boldsymbol{\textrm{Len}}(L_{G}) \times R\times L_{min} / \boldsymbol{\textrm{Sum}}(L_{G})$ \Comment{Allocate ratios proportionally to normalized loss}
            \State $layer\_ratios \gets layer\_ratios \cup r$ \Comment{Append ratio to list}
        \EndFor
         \State $R_d \gets R_d \cup layer\_ratios$  \Comment{Append $layer\_ratios$ list to $Rd$ list}
    \EndFor
    \State $R_d \gets \boldsymbol{\textrm{Convert\_Ratios\_to\_Ranks}}(M,R_d)$ \Comment{Convert ratios to ranks for each weight}
    \State $total\_params \gets \boldsymbol{\textrm{Count\_Total\_Parameters}}(M)$ \Comment{Count all the parameters of the model}
    \State $comp\_params \gets \boldsymbol{\textrm{Count\_Compressed\_Parameters}}(M,R_d)$ \Comment{Count all the parameters of the compressed model}
    \State $achieved\_ratio \gets 1 - comp\_params/total\_params)$    \Comment{Compute achieved ratio after rank allocation}
    \State $i \gets 0$ 
    \While{$|achieved\_ratio - R| > \varepsilon$}
        \State $sign \gets achieved\_ratio - R$ 
        \State $scale\gets (1-R)/(1-achieved\_ratio)$ \Comment{Compute scaling factor}
        \For{$layer\_ranks$ \textbf{in} $R_d$}
            \For{$r$ \textbf{in} $layer\_ranks$}
            \State $r \gets r + sign \cdot i$    \Comment{Iteratively adjust rank based on error direction}
            \State $R_d \gets \boldsymbol{\textrm{Update}}(R_d,r)$    \Comment{Update rank allocation list }
            \EndFor
        \EndFor
        \State $i\gets i + 1$     
    \EndWhile
    \State \Return{$R_d$}
\EndProcedure
\end{algorithmic}
\label{algo:ratio_allocation}
\end{algorithm}

%% file: sections/figures/weights-comparison.tex
\begin{figure*}[!ht]
    \centering
    \parbox{0.45\textwidth}{\centering
        \includegraphics[width=\linewidth]{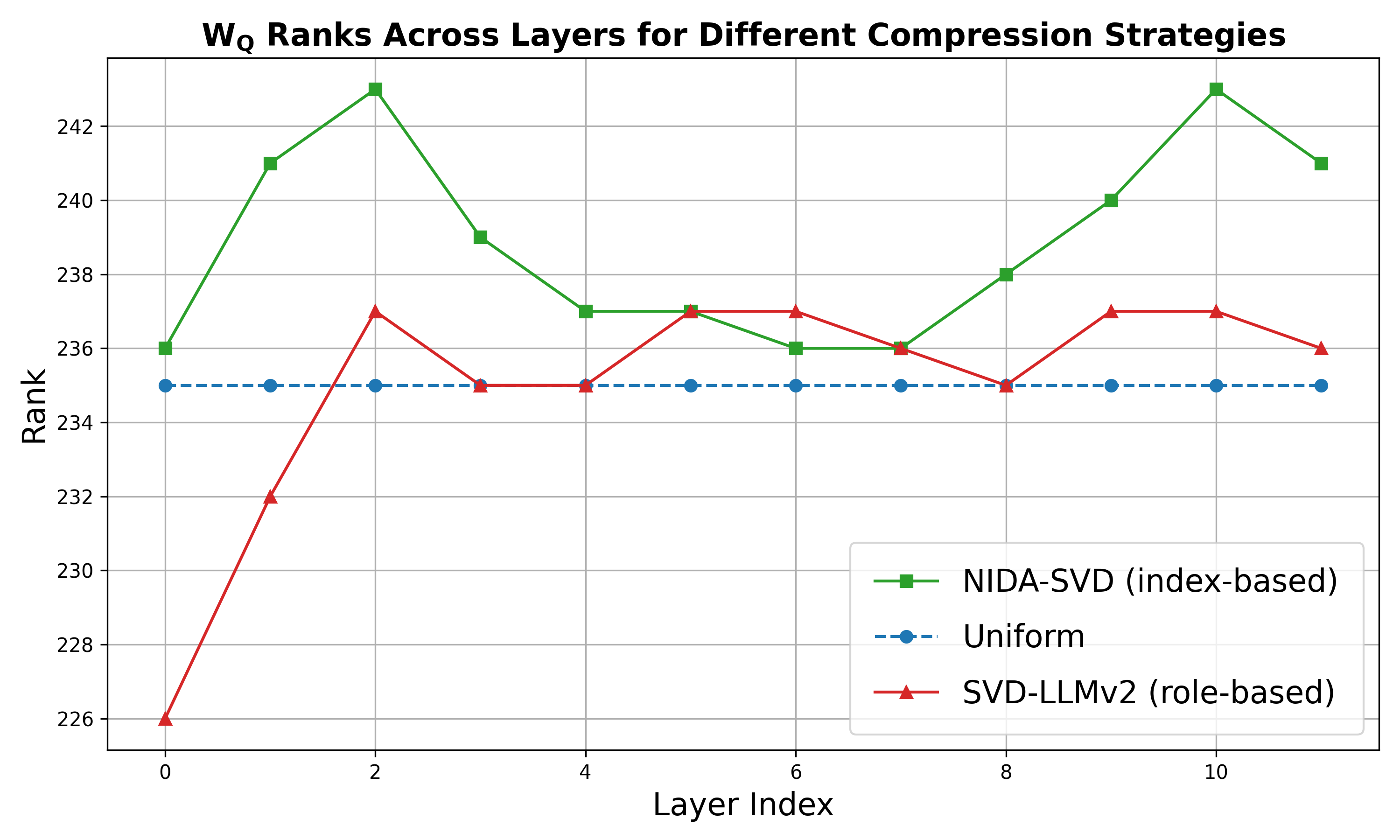}\\
        (a) $\mW_Q$
    }\hfill
    \parbox{0.45\textwidth}{\centering
        \includegraphics[width=\linewidth]{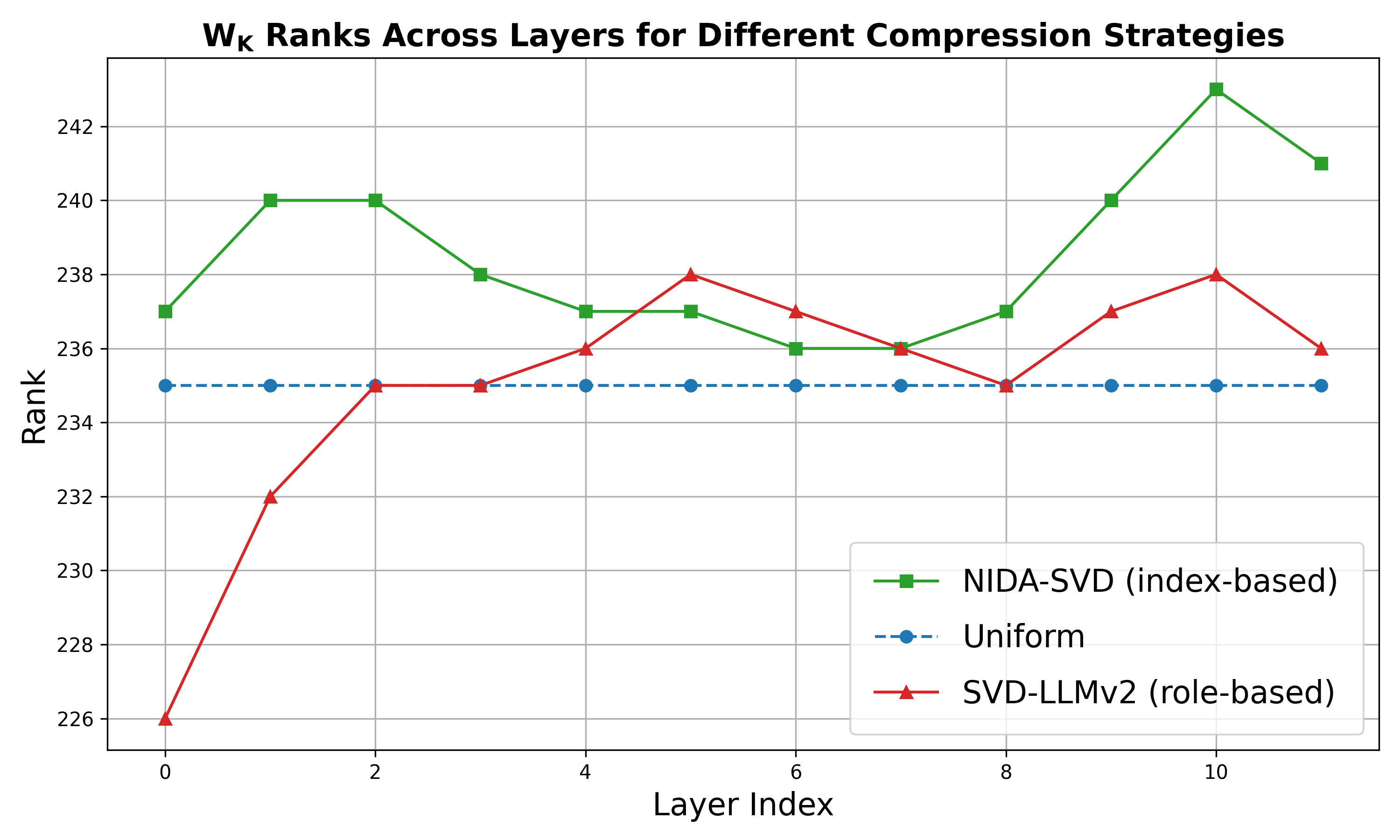}\\
        (b) $\mW_K$
    }

    \parbox{0.45\textwidth}{\centering
        \includegraphics[width=\linewidth]{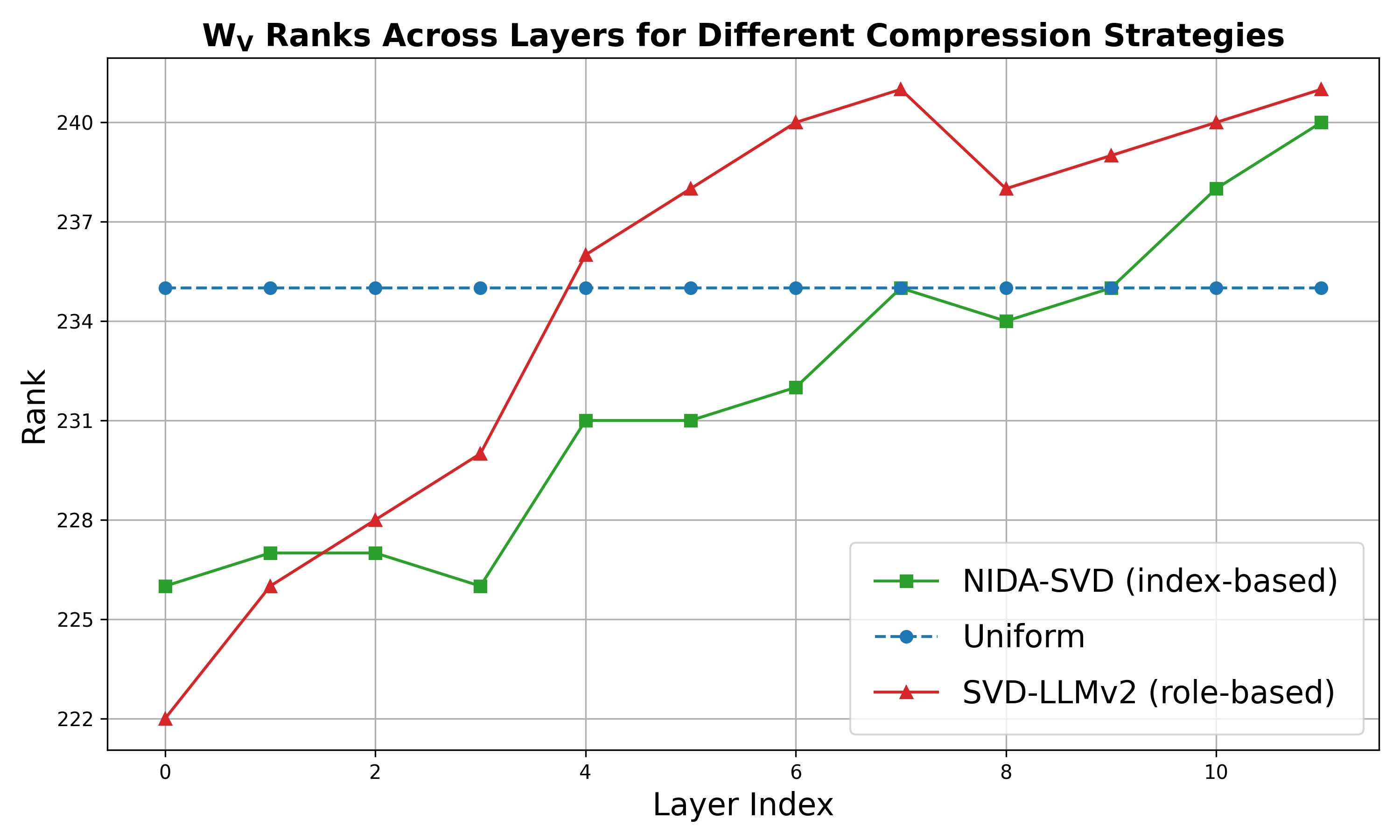}\\
        (c) $\mW_V$
    }\hfill
    \parbox{0.45\textwidth}{\centering
        \includegraphics[width=\linewidth]{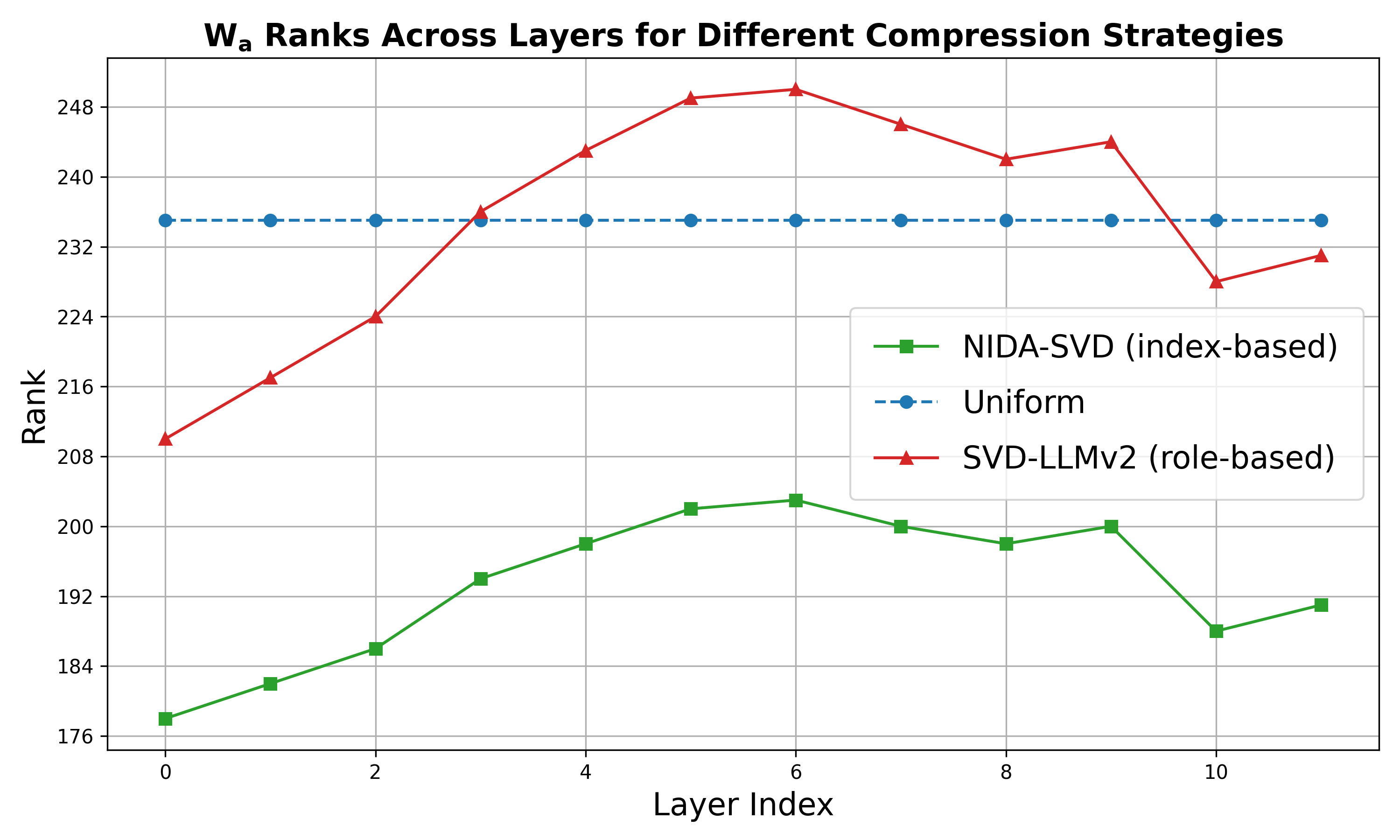}\\
        (d) $\mW_a$
    }

    \parbox{0.45\textwidth}{\centering
        \includegraphics[width=\linewidth]{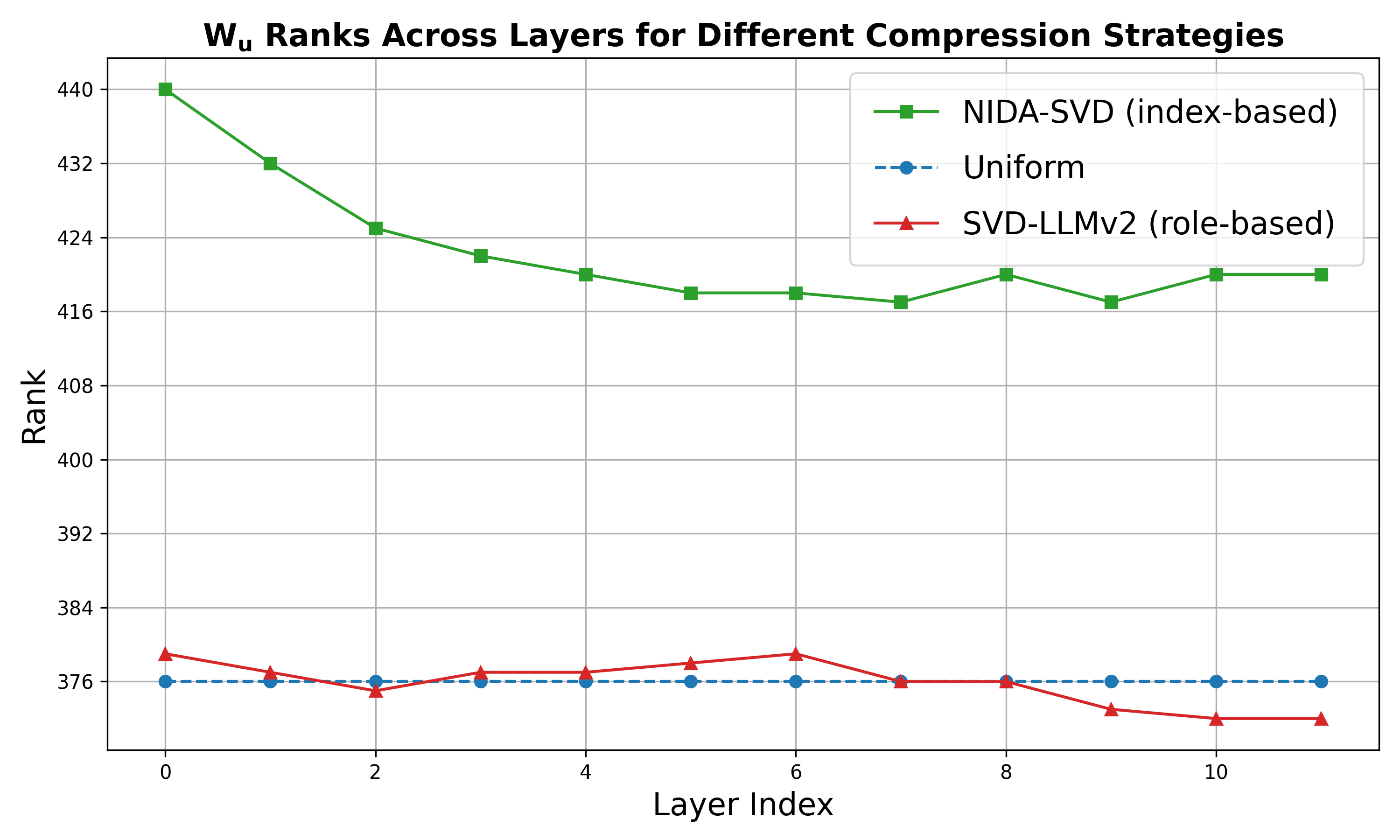}\\
        (e) $\mW_u$
    }\hfill
    \parbox{0.45\textwidth}{\centering
        \includegraphics[width=\linewidth]{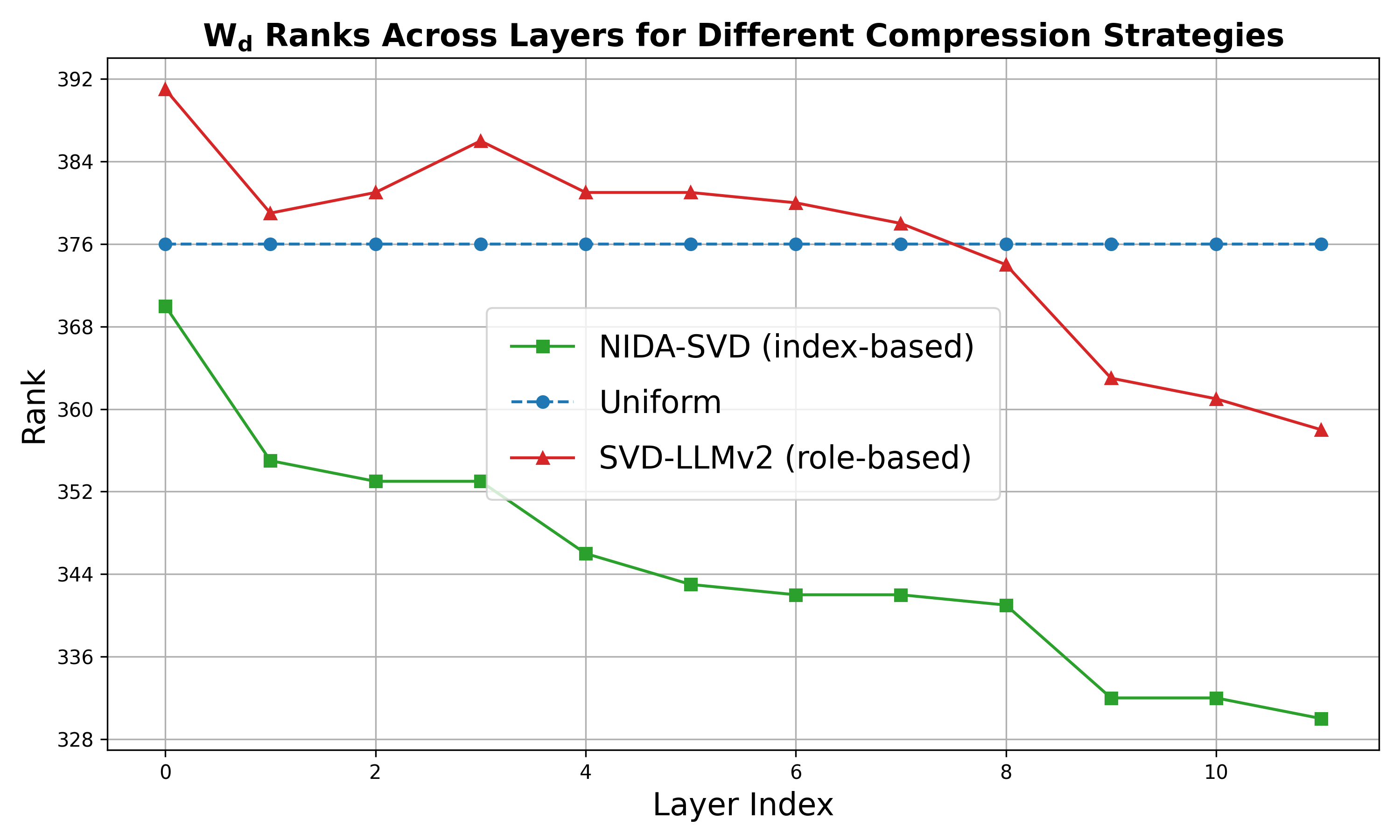}\\
        (f) $\mW_d$
    }

    \caption{Visualization of BERT model ranks under 30\% compression on QNLI dataset, comparing SVD-LLMv2 (role-based), Uniform, and NIDA-SVD strategies: (a) Query, (b) Key, (c) Value, (d) Attention, (e) Up, (f) Down.}
    \label{fig:weights-comparison}
\end{figure*}

%% file: sections/figures/ratio-comparison.tex
\begin{figure*}[!ht]
    \centering
    \parbox{0.45\textwidth}{\centering
        \includegraphics[width=\linewidth]{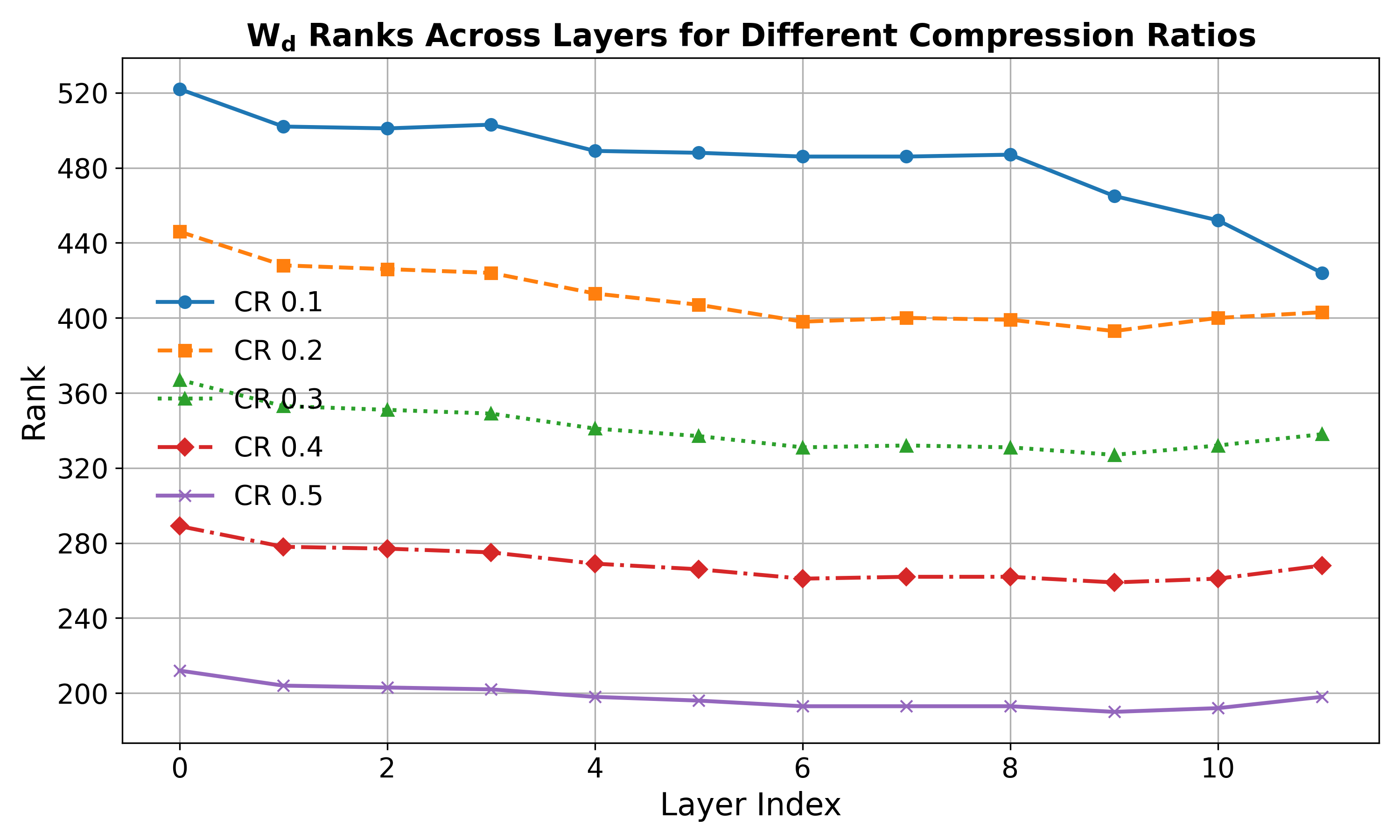}\\
        (a) $\mW_d$
    }\hfill
    \parbox{0.45\textwidth}{\centering
        \includegraphics[width=\linewidth]{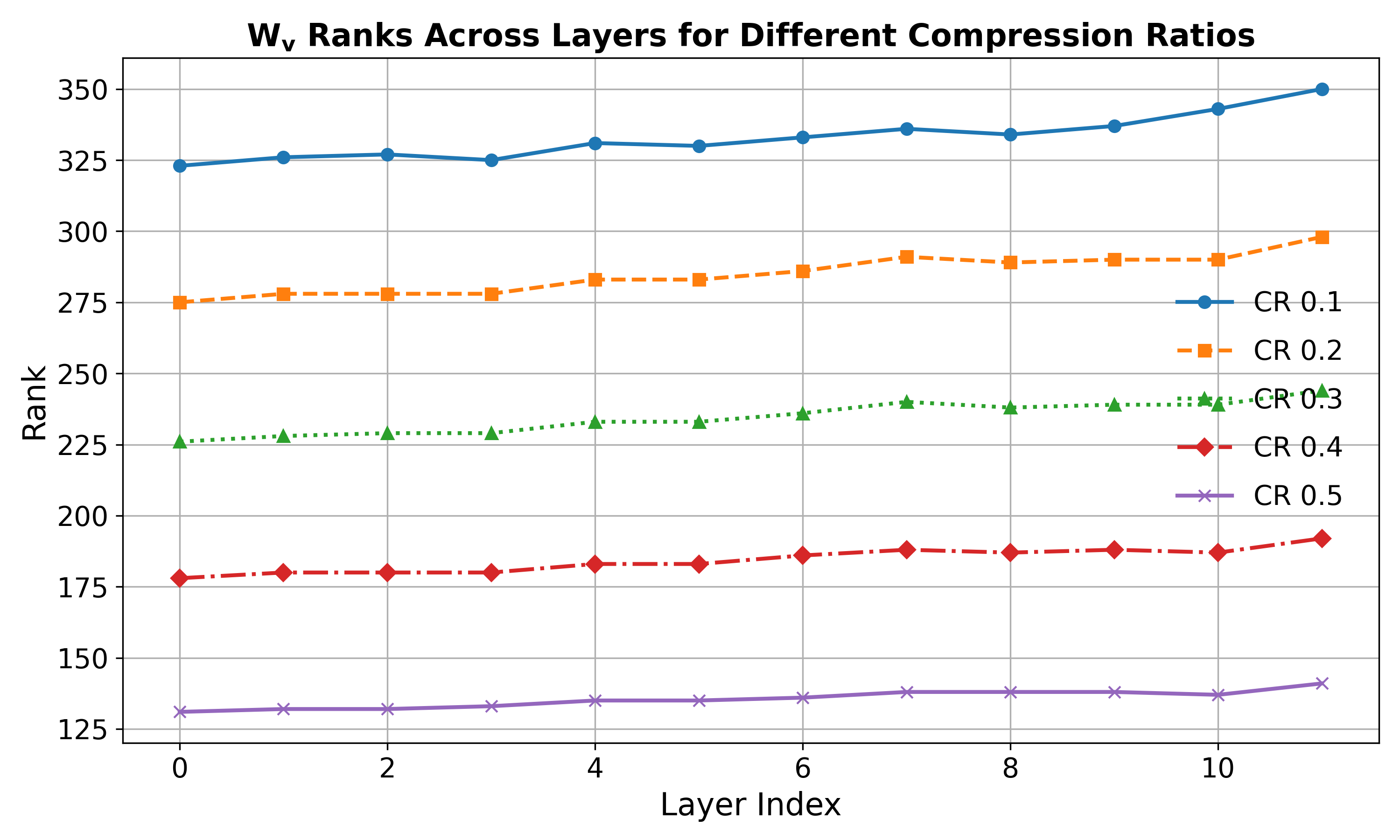}\\
        (b) $\mW_V$
    }
    \caption{Visualization of BERT model weight ranks across different compression ratios (layer-based strategy) on the MRPC dataset: (a) Down, (b) Value.}
    \label{fig:ratio-comparison}
\end{figure*}

%% file: sections/experiments.tex
\subsection{Experimental Setup}
We evaluate the effectiveness of our method by compressing the standard 12-layer BERT transformer architecture. We consider BERT, fine-tuned on 8 datasets of the GLUE-Benchmark \cite{glue-benchmark}, namely \textit{cola, mnli-m, mnli-mm, mrpc, qnli, qqp, sst-2} and \textit{sts-b}. Each dataset corresponds to a different task, and the performance of the (compressed) models is evaluated under a task-specific metric. Single sentence tasks, \textit{cola} and \textit{sst-2} are measured by Matthew’s correlation and classification accuracy, respectively. The sentence similarity tasks \textit{mrpc} and \textit{qqp} are assessed by F-1 score, while \textit{sts-b}, using Pearson-Spearman correlation. Finally, natural language inference tasks \textit{mnli-m}, \textit{mnli-mm} and \textit{qnli} are measured by accuracy.

For each one of the layers in the BERT architecture, the matrices that we consider for compression are the ones discussed in Section \ref{sec:transformer}, namely $\mW_K, \mW_Q, \mW_V, \mW_a, \mW_u, \mW_d$. We do not consider compressing the token embedding matrix, nor the matrices involved in the layer-normalization blocks. To compute the uncentered covariance matrix $\mS$ in Algorithm \ref{algo:weight_truncation}, we randomly sample a (balanced) dataset of 256 text samples from the dataset's training split. We also compute neuron importances (NIs) using the full training split of the dataset, as it is done in FWSVD for parameter importances (PIs). In contrast to FWSVD or SVD-LLMv2 which additionally consider fine-tuning the compressed models to recover performance, we \replaced{emphasize}{report} results and comparisons without a fine-tuning step. Even though fine-tuning may boost the results of any previous SVD-based method, including ours, we want to highlight the benefits of the proposed approach, namely Neuron Importance driven Data-Aware SVD (NIDA-SVD), under a low computation budget which does not involve finetuning. \added{Nevertheless, we report results with fine-tuning when compressing MobileBERT and TinyBERT.}

\added{The experimental setup is extended to include the structurally similar DistilBERT, as well as the specialized MobileBERT and TinyBERT architectures. Since these models are direct descendants of BERT, they are also fine-tuned and evaluated on specific subsets of the GLUE Benchmark using the corresponding task-specific metrics described above. Specifically, the models are evaluated on the following subsets: DistilBERT on \textit{mnli-mm}, \textit{mrpc}, \textit{qnli}, \textit{qqp}, and \textit{sst2}, MobileBERT on \textit{mrpc}, \textit{qnli}, \textit{qqp}, \textit{sst2}, and \textit{stsb}, and TinyBERT on \textit{mnli-mm}, \textit{mrpc}, \textit{qnli}, \textit{qqp}, \textit{sst2}, and \textit{stsb}. For all models, compression is applied to their Transformer encoder blocks and dense layers. The methodology for computing the matrix $\mS$ and the neuron importance $\mI$ remains identical for all models.}

In the following subsections, we present experimental results comparing our method with the recent state-of-the-art in low-rank approximation: SVD, FWSVD, and SVD-LLMv2, and conduct ablation studies under different strategies regarding parameter importance, neuron importance, and rank allocation. Since the algorithm of SVD-LLMv2 performs rank allocation based on the layer's functional role in the transformer block, we call it \textit{role-based allocation}. Contrariwise, our algorithm approaches rank allocation from the perspective of the layer's depth index, and thus we refer to it as \textit{index-based allocation}.

\subsection{Comparison against the state-of-the-art}

Table \ref{tab:soa-uniform} summarizes performance metrics for the BERT model compressed under different Compression Ratios (CR) with a uniform rank allocation strategy. Overall, in 35 out of 40 cases (87.5\%), our method is ranked first; the compressed model retains most of its task performance when compressed with our method.  In the rest of the cases, our method is ranked second, remaining competent to the top-performing method, which is always SVD-LLMv2. We emphasize the improvement that our method offers compared to the previous state-of-the-art, especially in the highest compression rate (0.5): +3.15\% \textbf{absolute improvement} in \textit{cola}, +4.16\% in \textit{mnli-m}, +4.82\% in \textit{mnli-mm}, +1.54\% in \textit{mrpc}, +2.48\% in \textit{qnli}, +2.35\% in \textit{qqp}, and +2.33\% in \textit{stsb}.

\input{sections/tables/soa-uniform}

Similarly, Table \ref{tab:vs-svd-llm-v2} summarizes performance metrics for the BERT model compressed either with our method or the previous state-of-the-art, SVD-LLMv2. In this comparison, we consider each method to use its own rank allocation algorithm: \textit{role-based} allocation for SVD-LLMv2 and \textit{index-based} allocation for the proposed method. For the extended experiments presented below, Table 3 (DistilBERT), Table 4 (MobileBERT), and Table 5 (TinyBERT), each utilized its own rank allocation strategy (role-based for SVD-LLMv2 and index-based for NIDA-SVD). For BERT, In 35 out of 40 cases (87.5\%), our method outperforms SVD-LLMv2 while remaining comparable to SVD-LLMv2 in the remaining 5. In this case, under the highest compression ratio (0.5), our method makes an \textbf{absolute improvement} of +4.41\% in \textit{cola}, +4.94\% in \textit{mnli-m}, +5.01\% in \textit{mnli-mm}, +1.82\% in \textit{mrpc}, +4.14\% in \textit{qnli}, +1.92\% in \textit{qqp} and +6.41\% in \textit{stsb}.

\input{sections/tables/vs-svdllm-v2}

\added{Table}
\ref{tab:distilbert-comp-compact}
\added{presents the DistilBERT results, comparing our NIDA-SVD  against the SVD-LLMv2. Our method shows a clear performance advantage, achieving the best result in 23 out of 25 comparisons (92\%). This advantage is especially pronounced at the compression ratio of 0.3, where NIDA-SVD delivers significant absolute improvements, including +0.90\% in \textit{mnli-mm} and +2.43\% in \textit{qnli}.}

\input{sections/tables/distilbert}

\added{The MobileBERT results, detailed in Table}
\ref{tab:mobilebert-comp-compact}
\added{, compare our NIDA-SVD approach against the SVD-LLMv2. Our methodology demonstrates a clear performance advantage in 21 of the 25 total comparisons (84\%). This general advantage is mostly observed across 0.3 compression ratio, including \textit{qqp} (+3.68\%), \textit{stsb} (+2.03\%), and \textit{sst2} (+1.03\%)}.

\input{sections/tables/mobilebert}

\added{The compression performance on the highly reduced TinyBERT architecture is documented in Table} 
\ref{tab:tinybert-comp-compact}
\added{. Comparing NIDA-SVD against the SVD-LLMv2, NIDA-SVD exhibits a robust performance advantage, securing the superior metric in 17 out of 18 total comparisons (94.4\%). The most substantial performance gains are concentrated at the compression ratio of 0.2. At this level, NIDA-SVD registers notable absolute increases, specifically +1.43\% in \textit{qnli}, +0.98\% in \textit{mnli-mm}, and +0.54\% in \textit{qqp}, confirming the efficacy of the data-aware approach even on extremely compact models.  Note that TinyBERT, being a highly distilled and compact model (14.3M parameters), was only compressed up to a compression ratio of 0.3 (70\% parameter reduction), as pushing for a higher compression ratio would likely lead to catastrophic performance collapse due to its limited architectural redundancy.}

\input{sections/tables/tinybert}

\added{While the primary focus of this work is post-training compression, it is informative to examine how well the performance of highly compressed models is recovered by subsequent fine-tuning. Table}
\ref{tab:comp-finetune}
\added{presents the results after a single fine-tuning epoch on both MobileBERT and TinyBERT, compressed at 0.5 and 0.3 compression ratio, respectively. Even after introducing the fine-tuning step, NIDA-SVD generally maintains better results than SVD-LLMv2. For MobileBERT, NIDA-SVD registers meaningful gains in \textit{qnli} and \textit{sst2} (both achieving an approximate +1.26\% improvement over SVD-LLMv2). Similarly, on TinyBERT, NIDA-SVD captures an advantage in the inference tasks (\textit{mrpc} and \textit{qnli}), with the \textit{qnli} metric recovering +0.92\% better than the baseline. Overall, across the six task-model pairs evaluated, the NIDA-SVD compressed models yield better accuracy in four instances, underscoring the robustness of our data-aware, importance-driven initialization.}

\input{sections/tables/compression-finetune}

\subsection{Ablation Studies}
\input{sections/tables/fw-ablation}
\input{sections/tables/allocation-ablation-svdllmv2}
\input{sections/tables/allocation-ablation-nida-svd}

In Table \ref{tab:fw-ablation} we provide experimental evidence that under uniform rank allocation, naively combining FWSVD with SVD-LLMv2 (NIDA-SVD (PI)) in 26 out of the 40 cases (more than half of them), the compressed model's task performance is inferior compared to when not using PI at all (in which case the algorithm is equivalent to vanilla SVD-LLMv2). We note that NIDA-SVD (PI) is equivalent to Algorithm \ref{algo:weight_truncation} but with $\hat{\mI}$ computed as discussed in Section \ref{sec:fwsvd}, whereas in NIDA-SVD (NI), $\hat{\mI}$ is computed as discussed in Section \ref{sec:neuron-importance}. Overall, in 38 out of 40 cases (95\%), NI is more effective than PI, and in 35 it improves upon SVD-LLMv2, while remaining competent in the rest of them.

Tables \ref{tab:ablation-rank-alloc-svdllmv2} and \ref{tab:ablation-rank-alloc-nida-svd} summarize experimental results regarding the efficacy of our proposed index-based rank allocation, compared to other strategies that were previously considered.  The experiments show that regardless of the compression algorithm being SVD-LLMv2 or NIDA-SVD, index-based allocation is a better choice for the majority of the cases (more than 67.5\% of them).
Furthermore, a row-wise comparison of Tables
\ref{tab:ablation-rank-alloc-svdllmv2} and \ref{tab:ablation-rank-alloc-nida-svd} reveals that, in most cases, NIDA-SVD outperforms SVD-LLMv2 under any rank allocation algorithm. While for uniform rank allocation this was discussed in the previous section, for role-based allocation, NIDA-SVD outperforms SVD-LLMv2 in 34 cases (85\%) while for index-based allocation in 33 (82.5\%). In many cases the improvement is substantial, especially at high compression ratios: by up to +4.54\% \textbf{absolute improvement} in role-based rank allocation and +7.97\% for index-based rank allocation.

\subsection{Rank Allocation Analysis}
The rank allocation analysis indicates that the index-based strategy (NIDA-SVD) offers advantages over the role-based approach (SVD-LLMv2), mainly because the role-based method handles matrix ranks in a more uniform manner across all transformer layers (especially in Figure \ref{fig:weights-comparison} (a),(b) and(e), resulting in a flatter rank profile that fails to capture the intrinsic heterogeneity in network's sensitivity among layers. This results in role-based ranks being consistently closer to uniform values, reflecting a more uniform compression approach across the network. In contrast, the index-based strategy, as detailed in Section \ref{sec:rank-allocation} and Algorithm \ref{algo:ratio_allocation}, groups all compressible weight types by individual layer index, allowing for a more targeted distribution of ranks that adaptively varies according to the specific redundancy and importance of each layer. This approach enables the index-based method to exploit layer-specific redundancies more effectively, as demonstrated in the plots of Figure \ref{fig:weights-comparison} (a)-(f). Particularly in Figure \ref{fig:weights-comparison} (e), the role-based ranks align much more closely with the uniform values when compared to the index-based ranks.

In Figure \ref{fig:weights-comparison} (a) and (b), the two images illustrate the rank allocations for the down projection and value weight matrices across the 12 layers of the BERT model under varying compression ratios (0.1 to 0.5) using the index-based strategy. For all compression ratios, the behavior of ranks within the same weight type follows a consistent pattern. Specifically, for the $\mW_d$ (Figure \ref{fig:ratio-comparison} (a)), the earlier layers (e.g. 0-5) are more sensitive to compression, requiring more parameters to maintain performance, in contrast to the later layers (e.g. 6-11). Also, for the $\mW_V$ (Figure \ref{fig:ratio-comparison} (b)), the earlier layers are less sensitive to compression than the later layers. As we can see, there are different patterns for different weight types across the layers.

Generally, we cannot assume that a weight will exhibit a specific behavior in rank compression across layers, as the sensitivity to compression can vary depending on the weight type and its index within the model. Our rank allocation algorithm identifies these distinct patterns by analyzing the intrinsic redundancy and importance of each layer, dynamically adjusting the ranks to optimize performance. 

\subsection{\added{Memory and Computational Analysis}}

\input{sections/tables/flops}

\added{The primary motivation for model compression is to minimize memory  and inference time, especially for deployment on resource-constrained devices. To quantify the advances achieved by our low-rank approximation methods, we conducted a complexity analysis measuring both the memory requirements, represented by total parameters (in millions), and the computational cost, measured in MFLOPS/token. This analysis conducted in DistilBERT and MobileBERT for comrpession ratio 0.5, and TinyBERT for compression ratio 0.3, comparing the uncompressed base models against the two SVD-based (SVD-LLMv2 and NIDA-SVD) compression approaches.}

\added{The results presented in Table \protect\ref{tab:flop-analysis}} \added{ clearly demonstrate the reduction in model size. For DistilBERT, compression reduces the total parameters from 66.9 million to approximately 33.4 million, achieving the targeted 50\ reduction. Similarly the memory size of the MobileBERT reduced from 24.5 million to about 12.1 million. This parameter reduction directly translated to a massive decrease in computational cost, as the MFLOPS/token for DistilBERT drops from 86 MFLOPS to just 19 MFLOPS, corresponding to a speedup factor of approximately 4.5. The most dramatic efficiency improvement is seen in TinyBERT, where a 30\% parameter reduction (from 14.3M to 10.0M) results in a spectacular 90\% reduction in computational cost, dropping from $\approx$ 9 MFLOPS/token down to less than 1 MFLOP/token.}

\added{Finaly, the comparison between our NIDA-SVD method and the SVD-LLMv2 method reveals that our importance-aware rank allocation strategy does not impose a measurable computational overhead. Both methods achieve nearly identical parameter counts and, consequently, highly similar MFLOPS/token values across all architectures. This is an expected outcome, as the fundamental change to the network's architecture (replacing a dense matrix $\mW$ with two smaller matrices $\mA$ and $\mB$) is the same for both approaches, regardless of how the compression rate is distributed, since the total computational budget for a fixed compression ratio remains constant. This confirms that in the rank allocation proccess, the performance efficacy of NIDA-SVD in contrast to other SVD-based methods is achieved solely through a smarter distribution of ranks (resources) across layers, not by increasing the computational expense.}

%% file: sections/tables/soa-uniform.tex
\begin{table*}
    \centering
    \captionsetup{font=small}
    \caption{Compressed model performance metrics for different compression ratios (CR) \added{on the BERT base model} under uniform rank allocation. In 35 out of 40 cases (87.5\%), our method (NIDA-SVD) achieves state-of-the-art performance. The best results are in \textbf{bold}, while the second-best are \underline{underlined}.}
    \label{tab:soa-uniform}
    \begin{tabular}{llccccccccc}
\hline
CR & Method & Rank Allocation & cola & mnli-m & mnli-mm & mrpc & qnli & qqp & sst2 & stsb \\ \hline
N/A & BERT base & N/A & 0.562 & 0.847 & 0.849 & 0.906 & 0.916 & 0.88 & 0.923 & 0.891 \\
\hline
\multirow{4}{*}{0.1} & SVD & \multirow{4}{*}{uniform} & 0.2372 & 0.7745 & 0.7363 & 0.3693 & 0.7151 & 0.8043 & 0.8864 & 0.8028 \\
 & FWSVD &  & 0.3488 & 0.8289 & 0.8148 & 0.8877 & 0.8936 & 0.8667 & 0.8933 & 0.8677 \\
 & SVD-LLMv2 &  & \underline{0.5548} & \underline{0.8407} & \textbf{0.8423} & \underline{0.9075} & \underline{0.9115} & \underline{0.8740} & \textbf{0.9174} & \underline{0.8838} \\
 & NIDA-SVD &  & \textbf{0.5600} & \textbf{0.8434} & \underline{0.8418} & \textbf{0.9157} & \textbf{0.9119} & \textbf{0.8747} & \underline{0.9162} & \textbf{0.8846} \\
\hline
\multirow{4}{*}{0.2} & SVD & \multirow{4}{*}{uniform} & 0.1010 & 0.6744 & 0.6183 & 0.0282 & 0.5663 & 0.7145 & 0.8761 & 0.7025 \\
 & FWSVD & & 0.0657 & 0.7987 & 0.7526 & 0.8675 & 0.8500 & 0.8359 & 0.8761 & 0.8375 \\
 & SVD-LLMv2 &  & \textbf{0.5254} & \underline{0.8349} & \underline{0.8335} & \underline{0.9071} & \underline{0.9009} & \underline{0.8652} & \underline{0.9151} & \underline{0.8729} \\
 & NIDA-SVD &  & \underline{0.5094} & \textbf{0.8383} & \textbf{0.8352} & \textbf{0.9128} & \textbf{0.9068} & \textbf{0.8684} & \textbf{0.9185} & \textbf{0.8740} \\
\hline
\multirow{4}{*}{0.3} & SVD & \multirow{4}{*}{uniform} & 0.0358 & 0.4895 & 0.4284 & 0.0000 & 0.4980 & 0.6083 & 0.8245 & 0.5043 \\
 & FWSVD &  & 0.0270 & 0.7065 & 0.6275 & 0.8320 & 0.6760 & 0.7488 & 0.8360 & 0.7871 \\
 & SVD-LLMv2 &  & \textbf{0.4543} & \underline{0.8210} & \underline{0.8152} & \underline{0.8863} & \underline{0.8755} & \underline{0.8514} & \underline{0.9025} & \underline{0.8239} \\
 & NIDA-SVD &  & \underline{0.4451} & \textbf{0.8245} & \textbf{0.8203} & \textbf{0.9003} & \textbf{0.8898} & \textbf{0.8561} & \textbf{0.9048} & \textbf{0.8420} \\
\hline
\multirow{4}{*}{0.4} & SVD & \multirow{4}{*}{uniform} & -0.0028 & 0.3850 & 0.3614 & 0.0000 & 0.4946 & 0.5764 & 0.6559 & 0.4276 \\
 & FWSVD &  & 0.0207 & 0.5562 & 0.5152 & 0.6867 & 0.4978 & 0.6952 & 0.7213 & 0.7102 \\
 & SVD-LLMv2 &  & \underline{0.3206} & \underline{0.7766} & \underline{0.7691} & \underline{0.8376} & \underline{0.8088} & \underline{0.8197} & \underline{0.8876} & \underline{0.7295} \\
 & NIDA-SVD &  & \textbf{0.3271} & \textbf{0.7901} & \textbf{0.7851} & \textbf{0.8657} & \textbf{0.8359} & \textbf{0.8322} & \textbf{0.8979} & \textbf{0.7652} \\
\hline
\multirow{4}{*}{0.5} & SVD & \multirow{4}{*}{uniform} & -0.0252 & 0.3713 & 0.3703 & 0.0000 & 0.4946 & 0.5743 & 0.5126 & 0.2265 \\
 & FWSVD &  & -0.0172 & 0.3541 & 0.4778 & 0.0000 & 0.4949 & 0.4345 & 0.6032 & 0.5981 \\
 & SVD-LLMv2 &  & \underline{0.2079} & \underline{0.6727} & \underline{0.6678} & \underline{0.8176} & \underline{0.6875} & \underline{0.7602} & \textbf{0.8772} & \underline{0.6426} \\
 & NIDA-SVD &  & \textbf{0.2394} & \textbf{0.7143} & \textbf{0.7160} & \textbf{0.8330} & \textbf{0.7113} & \textbf{0.7837} & \underline{0.8704} & \textbf{0.6659} \\
\hline
\end{tabular}
\end{table*}

%% file: sections/tables/vs-svdllm-v2.tex
\begin{table*}
    \centering
    \captionsetup{font=small}
    \caption{Comparison of our method (NIDA-SVD) against the previous state-of-the-art (SVD-LLMv2) \added{for BERT base model} under different compression ratios (CR). Each method is complemented with its own rank allocation algorithm. In 34 out of 40 cases (85\%), our method outperforms SVD-LLMv2, especially in high compression ratios. The best results are in \textbf{bold}.}
    \label{tab:vs-svd-llm-v2}
\begin{tabular}{llccccccccc}
\hline
CR                   & Method    & Rank Allocation & cola            & mnli-m          & mnli-mm         & mrpc            & qnli            & qqp             & sst2            & stsb            \\ \hline
N/A                  & BERT base & N/A             & 0.562           & 0.847           & 0.849           & 0.906           & 0.916           & 0.88            & 0.923           & 0.891           \\ \hline
\multirow{2}{*}{0.1}                 & SVD-LLMv2 & role-based      & \textbf{0.5573} & 0.8419          & 0.8429          & \textbf{0.9087} & \textbf{0.9126} & 0.8736          & 0.9139          & 0.8841          \\
                     & NIDA-SVD  & index-based     & 0.5522          & \textbf{0.8456} & \textbf{0.8444} & 0.9078          & 0.9121          & \textbf{0.8739} & \textbf{0.9162} & \textbf{0.8866} \\ \hline
\multirow{2}{*}{0.2} & SVD-LLMv2 & role-based      & \textbf{0.5325} & 0.8369          & 0.8340          & \textbf{0.9134} & 0.9002          & 0.8657          & 0.9110          & 0.8722          \\
                     & NIDA-SVD  & index-based     & 0.5147          & \textbf{0.8396} & \textbf{0.8387} & 0.9090          & \textbf{0.9068} & \textbf{0.8690} & \textbf{0.9116} & \textbf{0.8793} \\ \hline
\multirow{2}{*}{0.3} & SVD-LLMv2 & role-based      & 0.4556          & 0.8206          & 0.8171          & 0.8907          & 0.8766          & 0.8531          & 0.9002          & 0.8242          \\
                     & NIDA-SVD  & index-based     & \textbf{0.4736} & \textbf{0.8283} & \textbf{0.8224} & \textbf{0.9026} & \textbf{0.8923} & \textbf{0.8576} & \textbf{0.9013} & \textbf{0.8608} \\ \hline
\multirow{2}{*}{0.4} & SVD-LLMv2 & role-based      & 0.3276          & 0.7802          & 0.7715          & 0.8366          & 0.8103          & 0.8199          & 0.8944          & 0.7231          \\
                     & NIDA-SVD  & index-based     & \textbf{0.3486} & \textbf{0.7954} & \textbf{0.7907} & \textbf{0.8708} & \textbf{0.8458} & \textbf{0.8345} & \textbf{0.8956} & \textbf{0.8031} \\ \hline
\multirow{2}{*}{0.5} & SVD-LLMv2 & role-based      & 0.2039          & 0.6762          & 0.6750          & 0.8176          & 0.6917          & 0.7698          & \textbf{0.8727} & 0.6435          \\
                     & NIDA-SVD  & index-based     & \textbf{0.2480} & \textbf{0.7255} & \textbf{0.7251} & \textbf{0.8358} & \textbf{0.7331} & \textbf{0.7890} & 0.8704          & \textbf{0.7076} \\ \hline
\end{tabular}
\end{table*}

%% file: sections/tables/distilbert.tex
\begin{table*}
\centering
\captionsetup{font=small}
\caption{\added{Comparison of our method (NIDA-SVD) against the previous state-of-the-art (SVD-LLMv2) for DistilBERT base model under different compression ratios (CR). Each method is complemented with its own rank allocation algorithm. In 23 out of 25 cases (92\%), our method outperforms SVD-LLMv2, especially in high compression ratios. The best results are in \textbf{bold}.}}
\label{tab:distilbert-comp-compact}
\begin{tabular}{llcccccc}
\hline 
CR & Method & Rank Allocation & mnli-mm & mrpc & qnli & qqp & sst2 \\ 
\hline 
N/A & DistilBERT base & N/A & 0.8232 & 0.8892 & 0.8874 & 0.8643 & 0.9128 \\ 
\hline 
\multirow{2}{*}{0.1} & SVD-LLMv2 & role-based & 0.8147 & 0.8851 & 0.8720 & 0.8569 & 0.9101 \\
& NIDA-SVD & index-based & \textbf{0.8172} & \textbf{0.8877} & \textbf{0.8724} & \textbf{0.8579} & \textbf{0.9105} \\ 
\hline 
\multirow{2}{*}{0.2} & SVD-LLMv2 & role-based & 0.8055 & 0.8866 & 0.8420 & 0.8490 & \textbf{0.9092} \\
& NIDA-SVD & index-based & \textbf{0.8088} & \textbf{0.8870} & \textbf{0.8561} & \textbf{0.8497} & 0.9082 \\ 
\hline 
\multirow{2}{*}{0.3} & SVD-LLMv2 & role-based & 0.7742 & 0.8756 & 0.7761 & 0.8317 & 0.9036 \\
& NIDA-SVD & index-based & \textbf{0.7832} & \textbf{0.8766} & \textbf{0.8004} & \textbf{0.8320} & \textbf{0.9059} \\ 
\hline 
\multirow{2}{*}{0.4} & SVD-LLMv2 & role-based & 0.7016 & 0.8396 & 0.6421 & 0.7591 & 0.8841 \\
& NIDA-SVD & index-based & \textbf{0.7132} & \textbf{0.8467} & \textbf{0.6615} & \textbf{0.7694} & \textbf{0.8876} \\ 
\hline 
\multirow{2}{*}{0.5} & SVD-LLMv2 & role-based & 0.5592 & 0.8040 & 0.5335 & \textbf{0.5101} & 0.8463 \\
& NIDA-SVD & index-based & \textbf{0.5727} & \textbf{0.8053} & \textbf{0.5387} & 0.4919 & \textbf{0.8532} \\ 
\hline 
\end{tabular}
\end{table*}

%% file: sections/tables/mobilebert.tex
\begin{table*}
\centering
\captionsetup{font=small}
\caption{\added{Comparison of our method (NIDA-SVD) against the previous state-of-the-art (SVD-LLMv2) for MobileBERT base model under different compression ratios (CR). Each method is complemented with its own rank allocation algorithm. In 22 out of 25 cases (88\%), our method outperforms SVD-LLMv2, especially in high compression ratios. The best results are in \textbf{bold}.}}
\label{tab:mobilebert-comp-compact}
\begin{tabular}{llcccccc} 
\hline
CR & Method & Rank Allocation & mrpc & qnli & qqp & sst2 & stsb \\ 
\hline
N/A & MobileBERT base & N/A & 0.8888 & 0.9068 & 0.8670 & 0.9128 & 0.8773 \\ 
\hline
\multirow{2}{*}{0.1} & SVD-LLMv2 & role-based & \textbf{0.8625} & 0.8224 & 0.7856 & 0.8830 & 0.8191 \\
& NIDA-SVD & index-based & 0.8603 & \textbf{0.8506} & \textbf{0.8042} & \textbf{0.8841} & \textbf{0.8452} \\ 
\hline
\multirow{2}{*}{0.2} & SVD-LLMv2 & role-based & 0.8277 & 0.7940 & 0.7437 & 0.8337 & 0.7741 \\
& NIDA-SVD & index-based & \textbf{0.8290} & \textbf{0.8065} & \textbf{0.7664} & \textbf{0.8451} & \textbf{0.7972} \\ 
\hline
\multirow{2}{*}{0.3} & SVD-LLMv2 & role-based & 0.8218 & 0.7298 & 0.6971 & 0.8027 & 0.7570 \\
& NIDA-SVD & index-based & \textbf{0.8284} & \textbf{0.7346} & \textbf{0.7339} & \textbf{0.8130} & \textbf{0.7773} \\ 
\hline
\multirow{2}{*}{0.4} & SVD-LLMv2 & role-based & 0.8183 & 0.5976 & 0.6750 & 0.7717 & 0.6691 \\
& NIDA-SVD & index-based & \textbf{0.8210} & \textbf{0.5991} & \textbf{0.6782} & \textbf{0.7740} & \textbf{0.7053} \\ 
\hline
\multirow{2}{*}{0.5} & SVD-LLMv2 & role-based & 0.6895 & 0.5081 & \textbf{0.5834} & 0.7419 & \textbf{0.4664} \\
& NIDA-SVD & index-based & \textbf{0.6919} & \textbf{0.5092} & 0.5735 & \textbf{0.7488} & 0.3867 \\ 
\hline
\end{tabular}
\end{table*}

%% file: sections/tables/tinybert.tex
\begin{table*}
\centering
\captionsetup{font=small}
\caption{\added{Comparison of our method (NIDA-SVD) against the previous state-of-the-art (SVD-LLMv2) for TinyBERT model under different compression ratios (CR). Each method is complemented with its own rank allocation algorithm. In 17 out of 18 cases (94\%), our method outperforms SVD-LLMv2, especially in high compression ratios. The best results are in \textbf{bold}.}}
\label{tab:tinybert-comp-compact}
\begin{tabular}{llccccccc}
\hline
CR                   & Method    & Rank Allocation & mnli-mm         & mrpc            & qnli            & qqp             & sst2            & stsb            \\ \hline
N/A                  & TinyBERT base & N/A             & 0.7955          & 0.8881          & 0.837           & 0.8512          & 0.8704          & 0.8731          \\ \hline
\multirow{2}{*}{0.1} & SVD-LLMv2 & role-based      & 0.7759          & 0.8722          & 0.8107          & 0.8405          & 0.8612          & 0.8647          \\
                     & NIDA-SVD  & index-based     & \textbf{0.778}  & \textbf{0.8736} & \textbf{0.8125} & \textbf{0.8421} & \textbf{0.8646} & \textbf{0.8672} \\ \hline
\multirow{2}{*}{0.2} & SVD-LLMv2 & role-based      & 0.6456          & 0.8278          & 0.6761          & 0.7837          & \textbf{0.8486} & 0.8202          \\
                     & NIDA-SVD  & index-based     & \textbf{0.6554} & \textbf{0.8291} & \textbf{0.6904} & \textbf{0.7891} & 0.8451          & \textbf{0.8233} \\ \hline
\multirow{2}{*}{0.3} & SVD-LLMv2 & role-based      & 0.3721          & 0.8122          & 0.5215          & 0.5406          & 0.5676          & 0.1146          \\
                     & NIDA-SVD  & index-based     & \textbf{0.3802} & \textbf{0.8122} & \textbf{0.5216} & \textbf{0.5422} & \textbf{0.5745} & \textbf{0.115}  \\ \hline
\end{tabular}
\end{table*}

%% file: sections/tables/compression-finetune.tex
\begin{table}
\centering
\captionsetup{font=small}
\caption{\added{Single-epoch fine-tuning comparison of NIDA-SVD and SVD-LLMv2 on MobileBERT at 0.5 compression ration and TinyBERT at 0.3 compression ratio. The results (F1/Accuracy on MRPC, QNLI, SST-2) show that fine-tuning after NIDA-SVD compression achieves better accuracy in 4 out of 6 cases (66.6\%), demonstrating robust performance despite the substantial compression level.}}
\label{tab:comp-finetune}
\begin{tabular}{lllll}
\hline
CR                   & Method     & mrpc            & qnli            & sst2            \\ \hline
N/A                  & MobileBERT & 0.8888          & 0.9068          & 0.9036          \\ \hline
\multirow{2}{*}{0.5} & SVD-LLMv2  & \textbf{0.8145} & 0.8281          & 0.8543          \\
                     & NIDA-SVD   & 0.8096          & \textbf{0.8407} & \textbf{0.8669} \\ \hline
N/A                  & TinyBERT   & 0.8881          & 0.837           & 0.8704          \\ \hline
\multirow{2}{*}{0.3} & SVD-LLMv2  & 0.816           & 0.6148          & \textbf{0.8325} \\
                     & NIDA-SVD   & \textbf{0.8222} & \textbf{0.624}  & 0.8176          \\ \hline
\end{tabular}
\end{table}

%% file: sections/tables/fw-ablation.tex
\begin{table*}
    \centering
    \captionsetup{font=small}
    \caption{Ablation Study under different compression ratios (CR): NIDA-SVD (ours) using Parameter Importance (PI) as suggested in FWSVD, and our proposed Neuron Importance (NI). In 38 out of 40 cases (95\%), NI is more effective than PI and in 35 (87.5\%) it improves upon SVD-LLMv2. In 26 out of 40 cases (65\%), using PI makes the results worse than not using it at all (NIDA-SVD (PI) vs SVD-LLMv2). The best results are in \textbf{bold}, while the second-best are \underline{underlined}.}
    \label{tab:fw-ablation}
    \begin{tabular}{llccccccccc}
\hline
CR & Method & Rank Allocation & cola & mnli-m & mnli-mm & mrpc & qnli & qqp & sst2 & stsb \\ \hline
\multirow{3}{*}{0.1} & SVD-LLMv2 & \multirow{3}{*}{uniform} & \underline{0.5548} & 0.8407 & \textbf{0.8423} & 0.9075 & \underline{0.9115} & \underline{0.8740} & \textbf{0.9174} & \underline{0.8838} \\
 & NIDA-SVD (PI) &  & 0.5482 & \underline{0.8416} & 0.8389 & \underline{0.9109} & 0.9108 & 0.8734 & 0.9139 & 0.8830 \\
 & NIDA-SVD (NI) &  & \textbf{0.5600} & \textbf{0.8434} & \underline{0.8418} & \textbf{0.9157} & \textbf{0.9119} & \textbf{0.8747} & \underline{0.9160} & \textbf{0.8846} \\
\hline
\multirow{3}{*}{0.2} & SVD-LLMv2 & \multirow{3}{*}{uniform} & \textbf{0.5254} & \underline{0.8349} & \underline{0.8335} & 0.9071 & 0.9009 & 0.8652 & \underline{0.9151} & \underline{0.8729} \\
 & NIDA-SVD (PI) &  & 0.4975 & 0.8334 & 0.8284 & \textbf{0.9159} & \underline{0.9033} & \underline{0.8658} & 0.9048 & 0.8704 \\
 & NIDA-SVD (NI) &  & \underline{0.5094} & \textbf{0.8383} & \textbf{0.8352} & \underline{0.9128} & \textbf{0.9068} & \textbf{0.8684} & \textbf{0.9185} & \textbf{0.8740} \\
\hline
\multirow{3}{*}{0.3} & SVD-LLMv2 & \multirow{3}{*}{uniform} & \textbf{0.4543} & \underline{0.8210} & \underline{0.8152} & 0.8863 & 0.8755 & \underline{0.8514} & \underline{0.9025} & 0.8239 \\
 & NIDA-SVD (PI) &  & 0.3650 & 0.8141 & 0.8022 & \underline{0.8970} & \underline{0.8791} & 0.8489 & 0.8922 & \underline{0.8285} \\
 & NIDA-SVD (NI) &  & \underline{0.4451} & \textbf{0.8245} & \textbf{0.8203} & \textbf{0.9003} & \textbf{0.8898} & \textbf{0.8561} & \textbf{0.9048} & \textbf{0.8420} \\
\hline
\multirow{3}{*}{0.4} & SVD-LLMv2 & \multirow{3}{*}{uniform} & \underline{0.3206} & \underline{0.7766} & \underline{0.7691} & 0.8376 & 0.8088 & \underline{0.8197} & 0.8876 & 0.7295 \\
 & NIDA-SVD (PI) &  & 0.2937 & 0.7687 & 0.7473 & \underline{0.8652} & \underline{0.8160} & 0.8174 & \underline{0.8887} & \underline{0.7433} \\
 & NIDA-SVD (NI) &  & \textbf{0.3271} & \textbf{0.7901} & \textbf{0.7851} & \textbf{0.8657} & \textbf{0.8359} & \textbf{0.8322} & \textbf{0.8979} & \textbf{0.7652} \\
\hline
\multirow{3}{*}{0.5} & SVD-LLMv2 & \multirow{3}{*}{uniform} & \underline{0.2079} & \underline{0.6727} & \underline{0.6678} & 0.8176 & \underline{0.6875} & 0.7602 & \textbf{0.8772} & \underline{0.6426} \\
 & NIDA-SVD (PI) &  & 0.1932 & 0.6685 & 0.6615 & \textbf{0.8384} & 0.6732 & \underline{0.7612} & 0.8566 & 0.6024 \\
 & NIDA-SVD (NI) &  & \textbf{0.2394} & \textbf{0.7143} & \textbf{0.7160} & \underline{0.8330} & \textbf{0.7113} & \textbf{0.7837} & \underline{0.8704} & \textbf{0.6659} \\
\hline
\end{tabular}
\end{table*}

%% file: sections/tables/allocation-ablation-svdllmv2.tex
\begin{table*}
    \centering
    \captionsetup{font=small}
    \caption{Ablation Study: Our index-based rank allocation algorithm improves vanilla SVD-LLMv2 compared to other allocation strategies in 27 out of 40 cases (67.5\%). The best results are in \textbf{bold}, while the second-best are \underline{underlined}.}
    \label{tab:ablation-rank-alloc-svdllmv2}
\begin{tabular}{llccccccccc}
\hline
CR & Method & Rank Allocation & cola & mnli-m & mnli-mm & mrpc & qnli & qqp & sst2 & stsb \\ \hline
\multirow{3}{*}{0.1} & \multirow{3}{*}{SVD-LLMv2} & uniform & \underline{0.5548} & 0.8407 & 0.8423 & 0.9075 & \underline{0.9115} & \textbf{0.8740} & \textbf{0.9174} & 0.8838 \\
 &  & role-based & \textbf{0.5573} & \underline{0.8419} & \underline{0.8429} & \underline{0.9087} & \textbf{0.9126} & 0.8736 & 0.9139 & \underline{0.8841} \\
 &  & index-based & 0.5490 & \textbf{0.8420} & \textbf{0.8435} & \textbf{0.9089} & 0.9088 & \underline{0.8737} & \underline{0.9174} & \textbf{0.8842} \\
\hline
\multirow{3}{*}{0.2} & \multirow{3}{*}{SVD-LLMv2} & uniform & 0.5254 & 0.8349 & 0.8335 & 0.9071 & \underline{0.9009} & 0.8652 & \textbf{0.9151} & \textbf{0.8729} \\
 &  & role-based & \underline{0.5325} & \underline{0.8369} & \underline{0.8340} & \textbf{0.9134} & 0.9002 & \underline{0.8657} & 0.9110 & 0.8722 \\
 &  & index-based & \textbf{0.5334} & \textbf{0.8387} & \textbf{0.8356} & \underline{0.9115} & \textbf{0.9028} & \textbf{0.8670} & \underline{0.9116} & \underline{0.8727} \\
\hline
\multirow{3}{*}{0.3} & \multirow{3}{*}{SVD-LLMv2} & uniform & 0.4543 & \underline{0.8210} & 0.8152 & 0.8863 & 0.8755 & 0.8514 & \textbf{0.9025} & \underline{0.8239} \\
 &  & role-based & \underline{0.4556} & 0.8206 & \underline{0.8171} & \textbf{0.8907} & \underline{0.8766} & \underline{0.8531} & 0.9002 & \textbf{0.8242} \\
 &  & index-based & \textbf{0.4796} & \textbf{0.8255} & \textbf{0.8179} & \underline{0.8874} & \textbf{0.8810} & \textbf{0.8544} & \underline{0.9005} & 0.8237 \\
\hline
\multirow{3}{*}{0.4} & \multirow{3}{*}{SVD-LLMv2} & uniform & 0.3206 & 0.7766 & 0.7691 & \underline{0.8376} & 0.8088 & 0.8197 & 0.8876 & \textbf{0.7295} \\
 &  & role-based & \underline{0.3276} & \underline{0.7802} & \underline{0.7715} & 0.8366 & \underline{0.8103} & \underline{0.8199} & \underline{0.8944} & 0.7231 \\
 &  & index-based & \textbf{0.3446} & \textbf{0.7835} & \textbf{0.7770} & \textbf{0.8496} & \textbf{0.8222} & \textbf{0.8265} & \textbf{0.8948} & \underline{0.7234} \\
\hline
\multirow{3}{*}{0.5} & \multirow{3}{*}{SVD-LLMv2} & uniform & \underline{0.2079} & 0.6727 & 0.6678 & \underline{0.8176} & 0.6875 & 0.7602 & \textbf{0.8772} & 0.6426 \\
 &  & role-based & 0.2039 & \underline{0.6762} & \underline{0.6750} & 0.8176 & \underline{0.6917} & \textbf{0.7698} & 0.8727 & \underline{0.6435} \\
 &  & index-based & \textbf{0.2336} & \textbf{0.6846} & \textbf{0.6797} & \textbf{0.8237} & \textbf{0.7054} & \underline{0.7657} & \underline{0.8772} & \textbf{0.6441} \\
\hline
\end{tabular}
\end{table*}

%% file: sections/tables/allocation-ablation-nida-svd.tex
\begin{table*}
    \centering
    \captionsetup{font=small}
    \caption{Ablation Study: Our index-based rank allocation algorithm improves NIDA-SVD compared to other allocation strategies in 28 out of 40 cases (70\%). The best results are in \textbf{bold}, while the second-best are \underline{underlined}.}
    \label{tab:ablation-rank-alloc-nida-svd}
\begin{tabular}{llccccccccc}
\hline
CR & Method & Rank Allocation & cola & mnli-m & mnli-mm & mrpc & qnli & qqp & sst2 & stsb \\ \hline
\multirow{3}{*}{0.1} & \multirow{3}{*}{NIDA-SVD} & uniform & \underline{0.5600} & 0.8434 & \underline{0.8418} & \textbf{0.9157} & \underline{0.9119} & \underline{0.8747} & 0.9160 & 0.8846 \\
 &  & role-based & \textbf{0.5656} & \underline{0.8450} & 0.8416 & \underline{0.9141} & 0.9110 & \textbf{0.8749} & \textbf{0.9174} & \underline{0.8852} \\
 &  & index-based & 0.5522 & \textbf{0.8456} & \textbf{0.8444} & 0.9078 & \textbf{0.9121} & 0.8739 & \underline{0.9162} & \textbf{0.8866} \\
\hline
\multirow{3}{*}{0.2} & \multirow{3}{*}{NIDA-SVD} & uniform & 0.5094 & 0.8383 & 0.8352 & \textbf{0.9128} & \textbf{0.9068} & 0.8684 & \textbf{0.9185} & 0.8740 \\
 &  & role-based & \textbf{0.5198} & \underline{0.8386} & \underline{0.8363} & 0.9056 & 0.9066 & \underline{0.8687} & \underline{0.9185} & \underline{0.8754} \\
 &  & index-based & \underline{0.5147} & \textbf{0.8396} & \textbf{0.8387} & \underline{0.9090} & \underline{0.9068} & \textbf{0.8690} & 0.9116 & \textbf{0.8793} \\
\hline
\multirow{3}{*}{0.3} & \multirow{3}{*}{NIDA-SVD} & uniform & \underline{0.4451} & 0.8245 & 0.8203 & 0.9003 & \underline{0.8898} & 0.8561 & \underline{0.9048} & 0.8420 \\
 & & role-based & 0.4365 & \underline{0.8247} & \underline{0.8221} & \textbf{0.9063} & 0.8896 & \underline{0.8569} & \textbf{0.9059} & \underline{0.8433} \\
 &  & index-based & \textbf{0.4736} & \textbf{0.8283} & \textbf{0.8224} & \underline{0.9026} & \textbf{0.8923} & \textbf{0.8576} & 0.9013 & \textbf{0.8608} \\
\hline
\multirow{3}{*}{0.4} & \multirow{3}{*}{NIDA-SVD} & uniform & 0.3271 & 0.7901 & 0.7851 & 0.8657 & 0.8359 & 0.8322 & \textbf{0.8979} & 0.7652 \\
 &  & role-based & \underline{0.3325} & \underline{0.7911} & \underline{0.7879} & \underline{0.8698} & \underline{0.8367} & \underline{0.8329} & \underline{0.8967} & \underline{0.7685} \\
 &  & index-based & \textbf{0.3486} & \textbf{0.7954} & \textbf{0.7907} & \textbf{0.8708} & \textbf{0.8458} & \textbf{0.8345} & 0.8956 & \textbf{0.8031} \\
\hline
\multirow{3}{*}{0.5} & \multirow{3}{*}{NIDA-SVD} & uniform & 0.2394 & 0.7143 & 0.7160 & 0.8330 & 0.7113 & 0.7837 & \underline{0.8704} & \underline{0.6659} \\
 &  & role-based & \underline{0.2424} & \underline{0.7152} & \underline{0.7163} & \underline{0.8348} & \underline{0.7127} & \underline{0.7849} & \textbf{0.8715} & 0.6648 \\
 &  & index-based & \textbf{0.2480} & \textbf{0.7255} & \textbf{0.7251} & \textbf{0.8358} & \textbf{0.7331} & \textbf{0.7890} & 0.8704 & \textbf{0.7076} \\
\hline
\end{tabular}
\end{table*}

%% file: sections/tables/flops.tex
\begin{table*}
\centering
\captionsetup{font=small}
\caption{\added{Computational and Memory Analysis. Comparison of MLFLOPS/token and total parameters (in Millions) for base and compressed DistilBERT, MobileBERT, and TinyBERT models}}
\label{tab:flop-analysis}
\begin{tabular}{llllcc}
\hline
Architecture                & GLUE                  & CR                   & Compression Method & \multicolumn{1}{l}{MLFLOPS/token} & \multicolumn{1}{l}{Total Parameters} \\ \hline
\multirow{9}{*}{distilBERT} & \multirow{3}{*}{mrpc} & N/A                  & base               & 85.96                                 & 66.9                                 \\ \cline{3-6} 
                            &                       & \multirow{2}{*}{0.5} & SVD-LLMv2          & 18.99                                 & 33.4                                 \\
                            &                       &                      & NIDA-SVD           & 18.99                                 & 33.4                                 \\ \cline{2-6} 
                            & \multirow{3}{*}{qnli} & N/A                  & base               & 85.97                                 & 66.9                                 \\ \cline{3-6} 
                            &                       & \multirow{2}{*}{0.5} & SVD-LLMv2          & 18.99                                 & 33.4                                 \\
                            &                       &                      & NIDA-SVD           & 19.07                                 & 33.5                                 \\ \cline{2-6} 
                            & \multirow{3}{*}{sst2} & N/A                  & base               & 85.56                                 & 66.9                                 \\ \cline{3-6} 
                            &                       & \multirow{2}{*}{0.5} & SVD-LLMv2          & 18.59                                 & 33.4                                 \\
                            &                       &                      & NIDA-SVD           & 18.73                                 & 33.5                                 \\ \hline
\multirow{9}{*}{mobileBERT} & \multirow{3}{*}{mrpc} & N/A                  & base               & 41.17                                 & 24.5                                 \\ \cline{3-6} 
                            &                       & \multirow{2}{*}{0.5} & SVD-LLMv2          & 16.33                                 & 12.1                                 \\
                            &                       &                      & NIDA-SVD           & 16.48                                 & 12.2                                 \\ \cline{2-6} 
                            & \multirow{3}{*}{qnli} & N/A                  & base               & 41.18                                 & 24.5                                 \\ \cline{3-6} 
                            &                       & \multirow{2}{*}{0.5} & SVD-LLMv2          & 16.33                                 & 12.1                                 \\
                            &                       &                      & NIDA-SVD           & 16.48                                 & 12.2                                 \\ \cline{2-6} 
                            & \multirow{3}{*}{sst2} & N/A                  & base               & 40.89                                 & 24.5                                 \\ \cline{3-6} 
                            &                       & \multirow{2}{*}{0.5} & SVD-LLMv2          & 16.05                                 & 12.1                                 \\
                            &                       &                      & NIDA-SVD           & 16.24                                 & 12.2                                 \\ \hline
\multirow{9}{*}{tinyBERT}   & \multirow{3}{*}{mrpc} & N/A                  & base               & 9.38                                  & 14.3                                 \\ \cline{3-6} 
                            &                       & \multirow{2}{*}{0.3} & SVD-LLMv2          & 0.75                                  & 10.0                                 \\
                            &                       &                      & NIDA-SVD           & 0.73                                  & 10.0                                 \\ \cline{2-6} 
                            & \multirow{3}{*}{qnli} & N/A                  & base               & 9.38                                  & 14.3                                 \\ \cline{3-6} 
                            &                       & \multirow{2}{*}{0.3} & SVD-LLMv2          & 0.75                                  & 10.0                                 \\
                            &                       &                      & NIDA-SVD           & 0.73                                  & 10.0                                 \\ \cline{2-6} 
                            & \multirow{3}{*}{sst2} & N/A                  & base               & 9.27                                  & 14.3                                 \\ \cline{3-6} 
                            &                       & \multirow{2}{*}{0.3} & SVD-LLMv2          & 0.64                                  & 10.0                                 \\
                            &                       &                      & NIDA-SVD           & 0.62                                  & 10.0                                 \\ \hline
\end{tabular}
\end{table*}

%% file: sections/conclusion.tex
In summary, this work introduced NIDA-SVD, a novel framework to compress LLM matrices that integrates neuron importance with data-aware low-rank approximation, alongside a computationally efficient algorithm for dynamic rank allocation. Experimental results demonstrate that NIDA-SVD consistently outperforms prior state-of-the-art methods under diverse allocation strategies, while our proposed rank allocation algorithm also independently strengthens both NIDA-SVD and previous approaches. Together, these contributions establish a robust and versatile compression solution, achieving substantial performance gains, especially at high compression ratios, paving the way for a more effective deployment of large-scale language models in resource-constrained setups.

%% file: bios/atdovas/bio.tex
\begin{IEEEbiography}[{\includegraphics[width=1in,height=1.25in,clip,keepaspectratio]{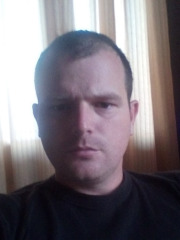}}]{Athanasios Ntovas}{\space} 
received his diploma from the Computers and Communication Engineering Department of the University of Thessaly (UTH) in 2019.

Since April 2020, he has been working as a research assistant in the Information Technologies Institute (ITI) of the Centre for Research and Technology Hellas (CERTH).
His research interests include artificial intelligence, computer vision, and digital signal processing. 
\end{IEEEbiography}

%% file: bios/aldoum/bio.tex
\begin{IEEEbiography}[{\includegraphics[width=1in,height=1.25in,clip,keepaspectratio]{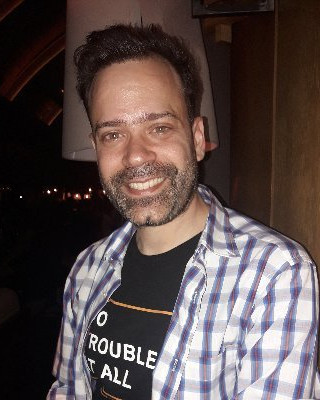}}]{Alexandros Doumanoglou}{\space} has received the diploma of Electrical and Computer Engineer from the Aristotle University of Thessaloniki (A.U.Th) and is currently doing his PhD in Explainable and Interpretable Artificial Intelligence at the department of Advanced Computing Sciences at Maastricht University, under the supervision of Prof. Kurt Driessens.

He joined the Information Technologies Institute in 2012, and since then he has been working as a research assistant in the fields of computer vision, 3D graphics, and machine learning. His current research interests include computer vision, unsupervised learning, representation learning, mechanistic interpretability, and explainable and interpretable methods for deep learning models.
\end{IEEEbiography}

%% file: bios/drak/bio.tex
\begin{IEEEbiography}[{\includegraphics[width=1in,height=1.25in,clip,keepaspectratio]{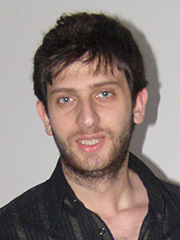}}]{Petros Drakoulis}{\space} received his BSc in Informatics from International Hellenic University and his MSc in Digital Media and Computational Intelligence from Aristotle University of Thessaloniki.

In 2018, he joined the Visual Computing Laboratory of CERTH-ITI where he works as a Research Associate ever since. His main areas of interest include Software Engineering, Visual Computing, Machine Learning and Graphics.
\end{IEEEbiography}

%% file: bios/zarpalas/bio.tex
\begin{IEEEbiography}[{\includegraphics[width=1in,height=1.25in,clip,keepaspectratio]{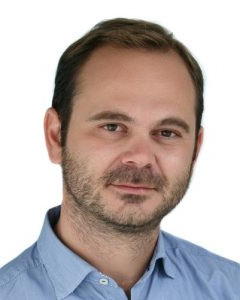}}]{Dimitris Zarpalas}{\space} holds the diploma of Electrical and Computer Engineer from Aristotle University of Thessaloniki, A.U.Th, an MSc in Electrical Engineering (focusing on computer vision) from The Pennsylvania State University, and a PhD in medical informatics (Health Science School, department of Medicine, A.U.Th).

He has joined the Information Technologies Institute in 2007, and is currently a researcher, grade C. His research interests include real time tele-immersion applications (3D reconstruction of moving
humans and their compression), 3D computer vision, 3D medical image processing, shape analysis of anatomical structures, 3D object recognition, motion capturing and evaluation, while in the past has also worked in indexing, search and retrieval, classification of 3D objects and 3D model watermarking.
\end{IEEEbiography}

%% file: refs.bib
@article{drone,
	title        = {Drone: Data-aware low-rank compression for large nlp models},
	author       = {Chen, Patrick and Yu, Hsiang-Fu and Dhillon, Inderjit and Hsieh, Cho-Jui},
	year         = 2021,
	journal      = {Advances in neural information processing systems},
	volume       = 34,
	pages        = {29321--29334}
}

@inproceedings{fwsvd,
	title        = {Language model compression with weighted low-rank factorization},
	author       = {Hsu, Yen-Chang and Hua, Ting and Chang, Sungen and Lou, Qian and Shen, Yilin and Jin, Hongxia},
	booktitle    = {International Conference on Learning Representations},
    year         = 2022,
}

@article{asvd,
	title        = {Asvd: Activation-aware singular value decomposition for compressing large language models},
	author       = {Yuan, Zhihang and Shang, Yuzhang and Song, Yue and Wu, Qiang and Yan, Yan and Sun, Guangyu},
	year         = 2023,
	journal      = {arXiv preprint arXiv:2312.05821}
}

@inproceedings{svd-llm,
	title        = {SVD-LLM: Truncation-aware Singular Value Decomposition for Large Language Model Compression},
	author       = {Wang, Xin and Zheng, Yu and Wan, Zhongwei and Zhang, Mi},
	booktitle    = {The Thirteenth International Conference on Learning Representations},
    year         = 2024,
}

@inproceedings{svd-llmv2,
	title        = {SVD-LLM V2: Optimizing Singular Value Truncation for Large Language Model Compression},
	author       = {Wang, Xin and Alam, Samiul and Wan, Zhongwei and Shen, Hui and Zhang, Mi},
	year         = 2025,
	booktitle    = {Proceedings of the 2025 Conference of the Nations of the Americas Chapter of the Association for Computational Linguistics: Human Language Technologies (Volume 1: Long Papers)},
	pages        = {4287--4296}
}

@article{comp-survey1,
	title        = {Compression of deep learning models for text: A survey},
	author       = {Gupta, Manish and Agrawal, Puneet},
	year         = 2022,
	journal      = {ACM Transactions on Knowledge Discovery from Data (TKDD)},
	publisher    = {ACM New York, NY},
	volume       = 16,
	number       = 4,
	pages        = {1--55}
}

@article{comp-survey2,
	title        = {A survey on model compression for large language models},
	author       = {Zhu, Xunyu and Li, Jian and Liu, Yong and Ma, Can and Wang, Weiping},
	year         = 2024,
	journal      = {Transactions of the Association for Computational Linguistics},
	publisher    = {MIT Press 255 Main Street, 9th Floor, Cambridge, Massachusetts 02142, USA~…},
	volume       = 12,
	pages        = {1556--1577}
}

@article{comp-survey3,
  title={Low-rank matrix completion: A contemporary survey},
  author={Nguyen, Luong Trung and Kim, Junhan and Shim, Byonghyo},
  journal={IEEE Access},
  volume={7},
  pages={94215--94237},
  year={2019},
  publisher={IEEE}
}

@article{comp-survey4,
  title={Learning and compressing: Low-rank matrix factorization for deep neural network compression},
  author={Cai, Gaoyuan and Li, Juhu and Liu, Xuanxin and Chen, Zhibo and Zhang, Haiyan},
  journal={Applied Sciences},
  volume={13},
  number={4},
  pages={2704},
  year={2023},
  publisher={MDPI}
}

@inproceedings{bert,
	title        = {Bert: Pre-training of deep bidirectional transformers for language understanding},
	author       = {Devlin, Jacob and Chang, Ming-Wei and Lee, Kenton and Toutanova, Kristina},
	year         = 2019,
	booktitle    = {Proceedings of the 2019 conference of the North American chapter of the association for computational linguistics: human language technologies, volume 1 (long and short papers)},
	pages        = {4171--4186}
}

@article{gpt,
	title        = {Improving language understanding by generative pre-training},
	author       = {Radford, Alec and Narasimhan, Karthik and Salimans, Tim and Sutskever, Ilya and others},
	year         = 2018,
	publisher    = {San Francisco, CA, USA}
}

@article{gemma2,
  title={Gemma 2: Improving Open Language Models at a Practical Size},
  author={Rivi{\`e}re, Morgane and Pathak, Shreya and Sessa, Pier Giuseppe and Hardin, Cassidy and Bhupatiraju, Surya and Hussenot, L{\'e}onard and Mesnard, Thomas and Shahriari, Bobak and Ram{\'e}, Alexandre and Ferret, Johan and others},
  journal={CoRR},
  year={2024}
}

@article{gemma3,
  publtype={informal},
  author={Aishwarya Kamath and Johan Ferret and Shreya Pathak and Nino Vieillard and Ramona Merhej and Sarah Perrin and Tatiana Matejovicova and Alexandre Ramé and Morgane Rivière and Louis Rouillard and Thomas Mesnard and Geoffrey Cideron and Jean-Bastien Grill and Sabela Ramos and Edouard Yvinec and Michelle Casbon and Etienne Pot and Ivo Penchev and Gaël Liu and Francesco Visin and Kathleen Kenealy and Lucas Beyer and Xiaohai Zhai and Anton Tsitsulin and Róbert Busa-Fekete and Alex Feng and Noveen Sachdeva and Benjamin Coleman and Yi Gao and Basil Mustafa and Iain Barr and Emilio Parisotto and David Tian and Matan Eyal and Colin Cherry and Jan-Thorsten Peter and Danila Sinopalnikov and Surya Bhupatiraju and Rishabh Agarwal and Mehran Kazemi and Dan Malkin and Ravin Kumar and David Vilar and Idan Brusilovsky and Jiaming Luo and Andreas Steiner and Abe Friesen and Abhanshu Sharma and Abheesht Sharma and Adi Mayrav Gilady and Adrian Goedeckemeyer and Alaa Saade and Alexander Kolesnikov and Alexei Bendebury and Alvin Abdagic and Amit Vadi and András György and André Susano Pinto and Anil Das and Ankur Bapna and Antoine Miech and Antoine Yang and Antonia Paterson and Ashish Shenoy and Ayan Chakrabarti and Bilal Piot and Bo Wu and Bobak Shahriari and Bryce Petrini and Charlie Chen and Charline Le Lan and Christopher A. Choquette-Choo and CJ Carey and Cormac Brick and Daniel Deutsch and Danielle Eisenbud and Dee Cattle and Derek Cheng and Dimitris Paparas and Divyashree Shivakumar Sreepathihalli and Doug Reid and Dustin Tran and Dustin Zelle and Eric Noland and Erwin Huizenga and Eugene Kharitonov and Frederick Liu and Gagik Amirkhanyan and Glenn Cameron and Hadi Hashemi and Hanna Klimczak-Plucinska and Harman Singh and Harsh Mehta and Harshal Tushar Lehri and Hussein Hazimeh and Ian Ballantyne and Idan Szpektor and Ivan Nardini},
  title={Gemma 3 Technical Report},
  year={2025},
  month={March},
  cdate={1740787200000},
  journal={CoRR},
  volume={abs/2503.19786},
  url={https://doi.org/10.48550/arXiv.2503.19786}
}

@article{llama,
  author       = {Hugo Touvron and
                  Thibaut Lavril and
                  Gautier Izacard and
                  Xavier Martinet and
                  Marie{-}Anne Lachaux and
                  Timoth{\'{e}}e Lacroix and
                  Baptiste Rozi{\`{e}}re and
                  Naman Goyal and
                  Eric Hambro and
                  Faisal Azhar and
                  Aur{\'{e}}lien Rodriguez and
                  Armand Joulin and
                  Edouard Grave and
                  Guillaume Lample},
  title        = {LLaMA: Open and Efficient Foundation Language Models},
  journal      = {CoRR},
  volume       = {abs/2302.13971},
  year         = {2023},
  url          = {https://doi.org/10.48550/arXiv.2302.13971},
  doi          = {10.48550/ARXIV.2302.13971},
  eprinttype    = {arXiv},
  eprint       = {2302.13971},
  bibsource    = {dblp computer science bibliography, https://dblp.org}
}

@article{claude,
	title        = {Claude 2.0 large language model: Tackling a real-world classification problem with a new iterative prompt engineering approach},
	author       = {Caruccio, Loredana and Cirillo, Stefano and Polese, Giuseppe and Solimando, Giandomenico and Sundaramurthy, Shanmugam and Tortora, Genoveffa},
	year         = 2024,
	journal      = {Intelligent Systems with Applications},
	publisher    = {Elsevier},
	volume       = 21,
	pages        = 200336
}

@article{prunning-1,
	title        = {Sparsity in deep learning: Pruning and growth for efficient inference and training in neural networks},
	author       = {Hoefler, Torsten and Alistarh, Dan and Ben-Nun, Tal and Dryden, Nikoli and Peste, Alexandra},
	year         = 2021,
	journal      = {Journal of Machine Learning Research},
	volume       = 22,
	number       = 241,
	pages        = {1--124}
}

@inproceedings{sparse-gpt-prune,
	title        = {Sparsegpt: Massive language models can be accurately pruned in one-shot},
	author       = {Frantar, Elias and Alistarh, Dan},
	year         = 2023,
	booktitle    = {International conference on machine learning},
	pages        = {10323--10337},
	organization = {PMLR}
}

@inproceedings{prune-slicegpt,
      title = {SliceGPT: Compress Large Language Models by Deleting Rows and Columns},
      author = {Saleh Ashkboos and Maximilian L. Croci and Marcelo Gennari do Nascimento and Torsten Hoefler and James Hensman},
      booktitle = {International Conference on Learning Representations (ICLR)},
      year = {2024}
}

@inproceedings{llm-surgeon-prune,
	title        = {The LLM Surgeon},
	author       = {van der Ouderaa, Tycho FA and Nagel, Markus and Van Baalen, Mart and Blankevoort, Tijmen},
	booktitle    = {The Twelfth International Conference on Learning Representations}
}

@article{quantization-1,
	title        = {Gptq: Accurate post-training quantization for generative pre-trained transformers},
	author       = {Frantar, Elias and Ashkboos, Saleh and Hoefler, Torsten and Alistarh, Dan},
	year         = 2022,
	journal      = {arXiv preprint arXiv:2210.17323}
}

@inproceedings{spin-quantization,
	title        = {SpinQuant: LLM Quantization with Learned Rotations},
	author       = {Liu, Zechun and Zhao, Changsheng and Fedorov, Igor and Soran, Bilge and Choudhary, Dhruv and Krishnamoorthi, Raghuraman and Chandra, Vikas and Tian, Yuandong and Blankevoort, Tijmen},
	year         = 2025,
	booktitle    = {The Thirteenth International Conference on Learning Representations}
}

@article{actaware-quantization,
	title        = {Awq: Activation-aware weight quantization for on-device llm compression and acceleration},
	author       = {Lin, Ji and Tang, Jiaming and Tang, Haotian and Yang, Shang and Chen, Wei-Ming and Wang, Wei-Chen and Xiao, Guangxuan and Dang, Xingyu and Gan, Chuang and Han, Song},
	year         = 2024,
	journal      = {Proceedings of machine learning and systems},
	volume       = 6,
	pages        = {87--100}
}

@inproceedings{bert-quantization,
	title        = {BinaryBERT: Pushing the Limit of BERT Quantization},
	author       = {Bai, Haoli and Zhang, Wei and Hou, Lu and Shang, Lifeng and Jin, Jin and Jiang, Xin and Liu, Qun and Lyu, Michael and King, Irwin},
	year         = 2021,
	booktitle    = {Proceedings of the 59th Annual Meeting of the Association for Computational Linguistics and the 11th International Joint Conference on Natural Language Processing (Volume 1: Long Papers)},
	pages        = {4334--4348}
}

@inproceedings{deberta,
	title        = {DeBerta: Decoding-Enhanced BERT with disentangled attention},
	author       = {He, Pengcheng and Liu, Xiaodong and Gao, Jianfeng and Chen, Weizhu},
	booktitle    = {International Conference on Learning Representations},
	yeat         = 2021
}

@inproceedings{debertav3,
	title        = {DeBERTaV3: Improving DeBERTa using ELECTRA-Style Pre-Training with Gradient-Disentangled Embedding Sharing},
	author       = {He, Pengcheng and Gao, Jianfeng and Chen, Weizhu},
	year         = 2023,
	booktitle    = {The Eleventh International Conference on Learning Representations}
}

@article{roberta,
	title        = {Roberta: A robustly optimized bert pretraining approach},
	author       = {Liu, Yinhan and Ott, Myle and Goyal, Naman and Du, Jingfei and Joshi, Mandar and Chen, Danqi and Levy, Omer and Lewis, Mike and Zettlemoyer, Luke and Stoyanov, Veselin},
	year         = 2019,
	journal      = {arXiv preprint arXiv:1907.11692}
}

@article{svd,
	title        = {A generalization of the Eckart-Young-Mirsky matrix approximation theorem},
	author       = {Hoffman, Alan},
	year         = 1987,
	journal      = {Linear Algebra and its applications},
	publisher    = {Elsevier},
	volume       = 88,
	pages        = {317--327}
}

@article{transformer,
	title        = {Attention is all you need},
	author       = {Vaswani, Ashish and Shazeer, Noam and Parmar, Niki and Uszkoreit, Jakob and Jones, Llion and Gomez, Aidan N and Kaiser, {\L}ukasz and Polosukhin, Illia},
	year         = 2017,
	journal      = {Advances in neural information processing systems},
	volume       = 30
}

@article{dynabert,
	title        = {Dynabert: Dynamic bert with adaptive width and depth},
	author       = {Hou, Lu and Huang, Zhiqi and Shang, Lifeng and Jiang, Xin and Chen, Xiao and Liu, Qun},
	year         = 2020,
	journal      = {Advances in Neural Information Processing Systems},
	volume       = 33,
	pages        = {9782--9793}
}

@inproceedings{tinybert,
          title={Tinybert: Distilling bert for natural language understanding},
          author={Jiao, Xiaoqi and Yin, Yichun and Shang, Lifeng and Jiang, Xin and Chen, Xiao and Li, Linlin and Wang, Fang and Liu, Qun},
          booktitle={Findings of the association for computational linguistics: EMNLP 2020},
          pages={4163--4174},
          year={2020}
}

@article{distillation-survey,
	title        = {Knowledge distillation: A survey},
	author       = {Gou, Jianping and Yu, Baosheng and Maybank, Stephen J and Tao, Dacheng},
	year         = 2021,
	journal      = {International journal of computer vision},
	publisher    = {Springer},
	volume       = 129,
	number       = 6,
	pages        = {1789--1819}
}

@inproceedings{matrix-decomp,
	title        = {Compressing Pre-trained Language Models by Matrix Decomposition},
	author       = {Ben Noach, Matan and Goldberg, Yoav},
	year         = 2020,
	booktitle    = {Proceedings of the 1st Conference of the Asia-Pacific Chapter of the Association for Computational Linguistics and the 10th International Joint Conference on Natural Language Processing},
	publisher    = {Association for Computational Linguistics},
	address      = {Suzhou, China},
	pages        = {884–889},
	doi          = {10.18653/v1/2020.aacl-main.88},
	url          = {https://aclanthology.org/2020.aacl-main.88},	
	language     = {en}
}

@book{DLBook,
	title        = {Deep Learning},
	author       = {Ian Goodfellow and Yoshua Bengio and Aaron Courville},
	year         = 2016,
	publisher    = {MIT Press},
}

@inproceedings{features-low-rank-weights-not,
	title        = {Compressing transformers: features are low-rank, but weights are not!},
	author       = {Yu, Hao and Wu, Jianxin},
	year         = 2023,
	booktitle    = {Proceedings of the AAAI Conference on Artificial Intelligence},
	volume       = 37,
	number       = 9,
	pages        = {11007--11015}
}

@article{glue-benchmark,
	title        = {GLUE: A Multi-Task Benchmark and Analysis Platform for Natural Language Understanding},
	author       = {Wang, Alex and Singh, Amanpreet and Michael, Julian and Hill, Felix and Levy, Omer and Bowman, Samuel R},
	year         = 2018,
	journal      = {EMNLP 2018},
	pages        = 353
}

@article{mobilebert,
      title={LightMobileBert: A secondary lightweight model based on MobileBert},
      author={Chen, Deguang and Zhou, Jie},
      journal={Journal of Intelligent \& Fuzzy Systems},
      volume={44},
      number={2},
      pages={2117--2129},
      year={2023},
      publisher={SAGE Publications Sage UK: London, England}
}

@article{distilbert,
  title={DistilBERT, a distilled version of BERT: smaller, faster, cheaper and lighter},
  author={Sanh, Victor and Debut, Lysandre and Chaumond, Julien and Wolf, Thomas},
  journal={arXiv preprint arXiv:1910.01108},
  year={2019}
}

@article{ptq,
  title={Optimizing spatial shift point-wise quantization},
  author={Kim, Eunhui and Lee, Kyong-Ha and Sung, Won-Kyung},
  journal={IEEE Access},
  volume={9},
  pages={68008--68016},
  year={2021},
  publisher={IEEE}
}

@inproceedings{qat,
  title={Efficientqat: Efficient quantization-aware training for large language models},
  author={Chen, Mengzhao and Shao, Wenqi and Xu, Peng and Wang, Jiahao and Gao, Peng and Zhang, Kaipeng and Luo, Ping},
  booktitle={Proceedings of the 63rd Annual Meeting of the Association for Computational Linguistics (Volume 1: Long Papers)},
  pages={10081--10100},
  year={2025}
}

@article{tsvd,
  title={Augmented implicitly restarted Lanczos bidiagonalization methods},
  author={Baglama, James and Reichel, Lothar},
  journal={SIAM Journal on Scientific Computing},
  volume={27},
  number={1},
  pages={19--42},
  year={2005},
  publisher={SIAM}
}

@article{rsvd,
  author = {Halko, N. and Martinsson, P. G. and Tropp, J. A.},
  title={Finding structure with randomness: Probabilistic algorithms for constructing approximate matrix decompositions},
  journal={SIAM review},
  volume={53},
  number={2},
  pages={217--288},
  year={2011},
  publisher={SIAM}
}
